\documentclass{article}
\usepackage[preprint,nonatbib]{neurips_2026}
\usepackage[utf8]{inputenc} 
\usepackage[T1]{fontenc}    
\usepackage{hyperref}       
\usepackage{url}            
\usepackage{booktabs}       
\usepackage{amsfonts}       
\usepackage{nicefrac}       
\usepackage{microtype}      
\usepackage{xcolor}         
\usepackage{amsmath}
\usepackage{amsthm}
\usepackage{amssymb}
\usepackage{graphicx}
\usepackage{float}
\usepackage{layout}
\theoremstyle{plain}
\newtheorem{theorem}{Theorem}
\usepackage{comment}
\usepackage{wrapfig}
\newcommand{\inv}{^{-1}}
\newcommand{\TI}{^{\mathrm{TI}}}
\usepackage{enumitem}

\title{A mathematical theory of balancing\\relational generalization and memorization}

\author{%
  Luke Cheng\\
  Center for Theoretical Neuroscience\\
  Columbia University\\
  New York, NY, 10027 \\
  \texttt{lc3616@columbia.edu}\And
  Samuel Lippl\\
  Center for Theoretical Neuroscience\\
  Columbia University\\
  New York, NY, 10027 \\
  \texttt{samuel.lippl@columbia.edu}
  }
\usepackage[backend=biber,sorting=none,style=numeric-comp]{biblatex}
\begin{document}

\definecolor{orange}{HTML}{ff8c00}
\definecolor{purple}{HTML}{9467bd}
\definecolor{green}{HTML}{1b7837}

\maketitle

\begin{abstract}
  Humans, animals, and modern machine learning models exhibit impressive abilities to learn complex behaviors and generalize these behaviors to unseen situations. This ability requires us to learn rules and regularities that allow for such generalizations. At the same time, in most complex environments, any rule will have its exceptions. How do learning systems balance between learning general regularities and memorizing exceptions? We argue that a lack of task paradigms has hindered the study of this essential ability. To address this gap, we introduce a novel task, transitive inference with exceptions, that tests for relational generalization and memorization of an exception to the relational rule. We then analytically characterize the behavior of a simple, theoretically tractable model of neural network learning (kernel ridge regression) across a broad family of representations and task parameters. We find that these models can balance between relational generalization and memorization, but unlike for transitive inference without an exception, successful generalization is sensitive to the specific representational geometry. We explain why this task is more challenging mechanistically by drawing on our analytical theory. Finally, we validate our theoretical insights in pretrained language models that are finetuned on ordered relations, finding that these models successfully generalize according to the transitive rule, but also make the kinds of systematic mistakes predicted by our theory. Overall, our theory shows how learning systems can balance between relational generalization and memorization, explains how this can go wrong, and emphasizes the need for new task paradigms designed to probe this ability.
\end{abstract}

\section{Introduction}
When learning to perform complex behaviors from limited data, humans, animals, and machine learning models must infer the rules and regularities of their environment \cite{kemp2008discovery,lake2017building} (Fig.~\ref{fig:setup}A). This allows them to generalize their experience to unseen situations involving familiar components, an ability also known as compositional generalization \cite{fodor1988connectionism,hupkes2020compositionality,keysers2020measuring}. For example, when learning a new language, we may notice that similar-sounding words (in related languages) often mean the same thing; this can help us learn new words faster. Understanding that certain environments share a stereotyped sequence of actions (e.g.\ at the airport we usually check our bag and then go through security) lets us better plan for the future \cite{baldassano2018representation}. And when learning chess, understanding that a queen is usually worth more to us than a knight will let us avoid making costly mistakes.

Yet in most complex environments, there will be exceptions to these rules (Fig.~\ref{fig:setup}B). For example, ``Gift'' in German means poison, not gift; we sometimes have to check a bag at the gate rather than before the security check; and in some cases sacrificing a queen can win us the game. Effective behavior requires learning these exceptions in addition to the regular structure (Fig.~\ref{fig:setup}C).

Thus, any learning system deployed in the real world will have to balance learning general rules with memorizing exceptions to those rules. While an extensive theoretical and empirical literature has investigated compositional generalization and memorization in isolation, it has remained unclear how learning systems can do both simultaneously \cite[though see][]{rumelhart1985learning}. Part of the issue is a lack of suitable task paradigms: while real-world tasks require a balance between rule learning and memorization, simple tasks amenable to theoretical study can often be solved by rules without considering any exceptions.

Here we address this gap, considering relational generalization as an important instance of compositional generalization \cite{halford2010relational,battaglia2018relational}. A canonical instance of a relational rule is given by transitivity: $A>B$ and $B>C$ implies $A>C$. Accordingly, testing for the ability to transitively generalize (``transitive inference,'' TI, Fig.~\ref{fig:setup}D) has been an important task paradigm for investigating relational reasoning in humans  \cite{piaget1928judgment,bryant1971transitive,ciranka2022asymmetric,nelli2023neural}, animals \cite{mcgonigle1977monkeys,davis1992transitive,grosenick2007fish,tibbetts2019transitive}, and (increasingly) neural networks \cite{de2001transitive,di2024geometrical,kay2024emergent,lippl2024mathematical,geerts2025relational,liu2025recoglab}. Conversely, both humans and animals (as well as neural networks) can also memorize intransitive relations (e.g.\ rock-paper-scissor-structures, ``transverse patterning,'' TP, Fig.~\ref{fig:setup}E) \cite{alvarado1992some,astur1998configural,dusek1998hippocampus}.

\begin{figure}
    \centering
    \includegraphics{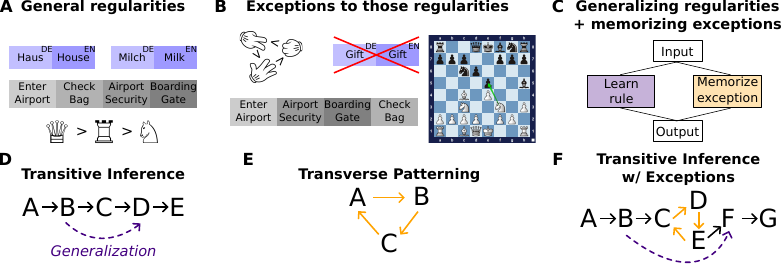}
    \caption{Complex behavior in the real world requires a mixture of general rule learning and memorizing exceptions. \textbf{A}, Identifying general regularities allows us to generalize to unseen situations. \textbf{B}, Most of these regularities have exceptions. (The chessboard shown here depicts an example of L{\'e}gal's Mate, a standard example where a player should sacrifice their queen, reprinted from \cite{chessfox_legals_mate}.) \textbf{C}, Optimal behavior in the real world requires learning these general regularities while also memorizing exceptions. \textbf{D}-\textbf{F}, Cognitive task paradigms probing the above abilities. \textit{D}, Transitive inference requires models or subjects to generalize using the transitive rule. \textit{E}, Transverse patterning tests whether subjects or models can learn intransitive relations. \textit{F}, We introduce a novel task: transitive inference with exceptions. This task requires subjects or models to both generalize according to the transitive rule and to memorize exceptions to the transitive rule.}
    \label{fig:setup}
\end{figure}

While transitive inference and transverse patterning have been extensively studied in isolation, it has remained unclear how learning systems perform on ordered relations that are mostly transitive but also include intransitive exceptions---a scenario we argue is likely common in complex decision-making tasks (see Section~\ref{sec:task-setup}). To study this, we make three major contributions:
\begin{enumerate}
    \item We introduce a novel task paradigm, transitive inference with exceptions (Fig.~\ref{fig:setup}F). This task is designed to probe relational generalization and memorization in a controlled setting.
    \item We analytically characterize a theoretical model of neural network learning (kernel ridge regression) across a general family of representations and task parameters. We find that kernel models can generalize successfully, but that this is sensitive to their representational geometry. We then leverage our analytical theory to mechanistically understand how these models generalize or fail to generalize.
    \item Finally, we validate our insights in pretrained language models finetuned on relational tasks.
\end{enumerate}
Overall, our findings demonstrate how simple learning systems can balance compositional generalization and memorization. At the same time, they highlight that compositional tasks that require such a balance are substantially more challenging than tasks that only require learning regularities. As a result, these tasks highlight novel failure modes and impose stronger constraints on which systems will be successful. We suggest that dealing with exceptions is a fundamental challenge for learning systems in the wild and future work should put a stronger emphasis on formulating analytically tractable task paradigms that capture this complexity.

\section{Related Work}
Compositional generalization has long been an important evaluation criterion for machine learning systems \cite{fodor1988connectionism,lake2017building,hupkes2020compositionality,lake2018generalization,johnson2017clevr,elmoznino2025towards,lepori2023break}. It has become increasingly central as modern learning systems have become more capable. While explicit constraints on the compositional operations in a neural network can guarantee compositional generalization \cite{schug2024discovering,wiedemer2023compositional,wiedemer2024provable,brady2025interaction,jarvis2023on}, our work emphasizes that neural networks often need to compositionally generalize in cases where explicit constraints don't work. Relational generalization, as a special instance of compositional generalization, focuses on generalizing to unseen situations by learning how different objects or items are related to each other \cite{halford2010relational,battaglia2018relational,whittington2020tolman,webb2024relational}. Transitive inference (TI; Fig.~\ref{fig:setup}D) is a classical task paradigm for assessing this ability in humans, animals, and neural networks by testing if subjects can generalize using the transitive rule \cite{vasconcelos2008transitive,jensen2017serial,geerts2025relational,liu2025recoglab}. In contrast, transverse patterning (TP; rock-paper-scissor-structure; Fig.~\ref{fig:setup}E) tests whether subjects are able to memorize fully intransitive relations. In the real world, many ordered relations may be largely transitive but also involve some exceptions to transitivity, a phenomenon that has been observed in a variety of contexts including competition between species, voting systems, and game theory \cite{trybula1961paradox,bar1988vicious,soliveres2018everything,klimenko2015intransitivity,conrey2016intransitive,arrow1950difficulty,austen1996information,bartholdi2025topology}. Our work builds on a long-standing literature that uses kernel models to understand neural network generalization \cite{jacot2018neural,chizat2019lazy,canatar2021out,canatar2021spectral,mallinar2022benign,zhou2024an,karkada2026predicting,abbe2024generalization,lippl2025when}. Prior work has suggested that such a perspective can help us understand finetuning in pretrained language models \cite{malladi2023kernel,afzal2026linearization}---this is consistent with our findings here. We refer the reader to a more detailed discussion of these related works in Appendix~\ref{app:related}. 
\section{Setup}
\label{sec:task}
\subsection{Task Setup}
\label{sec:task-setup}
To study relational inference, we consider a set of items $I_1,\dotsc,I_n$. To train models on a relation, we present pairs of items $(I_{j},I_{k})$ and an associated label $y_{j,k}\in\{-1,1\}$. If $y_{j,k}=1$, that indicates that $I_j>I_k$; if $y_{j,k}=-1$, that indicates that $I_j<I_k$. Items are represented by orthogonal vectors $x_1,\dotsc,x_n\in\mathbb{R}^d$ and each pair of items is represented by a simple concatenation of the two vectors, $x_{j,k}:=\begin{pmatrix}x_j&x_k\end{pmatrix}^T\in\mathbb{R}^{2d}$. This is a standard approach for studying compositional and relational inference in neural network models \cite[see e.g.][]{kay2024emergent,lippl2024mathematical,abbe2024generalization}. We note that all datasets we consider are antisymmetric: if they contain $(x_{j,k},1)$, they also contain $(x_{k,j},-1)$. If a model is trained on this data point, we say that it is trained on $I_j>I_k$.

\textbf{Transitive inference} tests whether subjects are able to infer an underlying ranking $I_1>\dotsb>I_n$. Models are trained on adjacent pairs $I_j>I_{j+1}$, and subsequently tested in their performance on $(I_j,I_k)$ for $|j-k|\geq2$. This tests whether models can use the transitive rule to generalize to novel combinations of items. \textbf{Transverse patterning} prevents models from inferring a consistent ranking system by adding the premise $I_n>I_1$, thus establishing an intransitive, cyclical relation. 

Our novel task, \textbf{transitive inference with exceptions}, introduces an arbitrary exception $I_p>I_q$ for $p>q$ (and $|p-q|> 1$). This means that the task consists of two ordered sections ($\mathcal{O}^{(1)}:=\{I_1,\dotsc,I_{q-1}\},\mathcal{O}^{(2)}:=\{I_{p+1},\dotsc,I_n\}$), with an intransitive section in the middle ($\mathcal{L}:=\{I_q,\dotsc,I_p\}$) (Fig.~\ref{fig:setup}f). We evaluate model predictions on four datasets (Fig.~\ref{fig:task}):%
\begin{wrapfigure}{r}{0.35\textwidth}
    \centering
    \includegraphics[width=\linewidth]{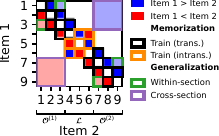}%
    \vspace{-1em}
    \caption{Transitive inference with exceptions. Illustration showing the relevant training and test sets as well as the associated relation. White squares reflect item pairs without an expected generalization.}%
    \vspace{-5em}
    \label{fig:task}
\end{wrapfigure}%
\begin{enumerate}[itemindent=0pt,leftmargin=*]
\item \textbf{Memorization of transitive pairs:} Does the model learn the training pairs where at least one item is in $\mathcal{O}^{(i)}$?
\item \textbf{\textcolor{orange}{Memorization of intransitive pairs:}} Does the model learn the training pairs where both items are in $\mathcal{L}$?
    \item \textbf{\textcolor{green}{Within-section generalization:}} Can the model generalize transitively within $\mathcal{O}^{(1)}$ or $\mathcal{O}^{(2)}$, i.e.\ on $\{(I_j,I_k)|I_j,I_k\in\mathcal{O}^{(i)};i=1,2\}$?
    \item \textbf{\textcolor{purple}{Cross-section generalization:}} Can the model generalize transitively across the sections, i.e.\ on $\{(I_j,I_k)|(I_j\in\mathcal{O}^{(1)}\wedge I_k\in\mathcal{O}^{(2)})\vee(I_j\in\mathcal{O}^{(2)}\wedge I_k\in\mathcal{O}^{(1)})\}$?
\end{enumerate}


For example, if subjects learn $A>B>C$, $E>F>G$, and $C>D>E>C$, an example of the desired cross-section generalization is $B>F$. This type of generalization allows the exception to create a local intransitive loop, without destroying the general transitive rule. We suggest that such a heuristic will yield successful relational generalization in the majority of cases. To ground our suggested heuristic in a concrete, mathematically precise example, we consider the one-on-one win probability of different hands in Texas Hold'em poker: $x\succ y$ iff the hand $x$ is more likely to win against hand $y$ (see Appendix~\ref{sec:poker}). This ordered relation contains intransitive loops \cite{bartholdi2025topology}. In analyzing this example, we found that typically, within-section and cross-section transitive generalization would have resulted in the appropriate generalization (Fig.~\ref{fig:poker}). 
While other intransitive relations may have a different structure, this provides an instance of a relation for which our suggested generalization is appropriate. 
\subsection{Model Setup}
\label{sec:model}
Given an input $x\in\mathbb{R}^{2d}$, we consider a random features model, $\phi(x)=\sigma(Wx)\in\mathbb{R}^{h}$, with random weights $W\in\mathbb{R}^{h\times 2d}$ and a nonlinearity $\sigma$. We denote the kernel induced by $\phi$ as $K(x,x'):=\phi(x)^T\phi(x')$.
In the limit of infinite width, $h\to\infty$, the kernel will take on three distinct values \cite{lippl2024mathematical}:
\begin{equation}
    K(x_{j,k},x_{j',k'})=\begin{cases}
        \kappa_s&\text{ if }j=j'\wedge k=k'\,\text{(identical trials)},\\
        \kappa_o&\text{ if }j=j'\vee k=k'\,\text{(overlapping trials)},\\
        \kappa_d&\text{ if }j\neq j'\wedge k\neq k'\,\text{(distinct trials)}.
    \end{cases}
\end{equation}
We call such kernels exchangeable, as the similarity between pairs of items doesn't depend on item identity, only on the overlap between the items. These kernels formalize the idea that we should learn about relations in terms of the interactions between different items rather than the specific feature similarity of those items. Given a dataset $\mathcal{D}=\{(x,y)\}$, we consider predictions made by a linear readout from $\phi(x)$, $f_w(x):=\phi(x)^Tw, w\in\mathbb{R}^h$ and consider ridge regression,
\begin{equation}
    \textstyle w^*:=\arg\min L(w),\quad L(w):=\sum_{x,y\in\mathcal{D}}(f_w(x)-y)^2+\tfrac{1}{c}\|w\|_2^2.
    \label{eq:opt}
\end{equation}
We consider $L_2$-regularization, as it often prevents overfitting to the training data and can therefore improve generalization at the cost of memorization \cite{nakkiran2021optimal,mei2022generalization}. Additionally, it is particularly relevant to our empirical experiments in Section~\ref{sec:dnns}, as we often want to limit the weight changes in finetuning language models, so as to prevent them from forgetting their pretraining data \cite{kirkpatrick2017overcoming,li2024revisiting,bethune2025scaling}.
Note that as $c\to\infty$, this network converges to the minimal $\ell_2$-norm solution $\min_w\|w\|_2^2\text{ s.t. }f_w(x)=y$.

We evaluate model performance in terms of the margin $yf(x)$. A positive margin indicates correct generalization and we consider a higher margin as indicative of better performance. More specifically, while we consider deterministic inputs that adhere to exchangeability exactly, in an empirical context (see e.g.~Section~\ref{sec:dnns}), these inputs might randomly vary. In that case, we assume that a higher margin will yield a more consistent output: intuitively, for points closer to the separating hyperplane, random variance in the input will be more likely to result in a perturbation that changes the prediction. For correct predictions (positive margins), that should result in increased accuracy; for incorrect predictions (negative margins), it should result in further decreases.

We will find that network behavior only depends on two factors:
\begin{align}
    \mbox{the ``conjunctivity factor'' }\alpha&:=1-2\frac{\kappa_o-\kappa_d}{\kappa_s-\kappa_d}\in[0,1],\\\mbox{the effective regularization parameter }\tilde{c}&:=\frac{c}{\kappa_s-\kappa_d}\in(0,\infty).
\end{align}
Intuitively, $\alpha$ captures how nonlinearly entangled the items are in $\phi(x)$. $\alpha=0$ (``compositional representation'') indicates that the two items are represented in orthogonal subspaces; $\alpha=1$ (``conjunctive representation'') indicates that they are fully entangled (Section~\ref{sec:extreme}).
The value of $\alpha$ depends on the nonlinearity, but typical values for a one-hidden-layer network are approximately in the range $(0,0.3)$ (e.g.\ a ReLU network has $\alpha\approx0.15$) \cite{lippl2024mathematical}.
\section{Theoretical Results}
\subsection{Main Theorem}
\textbf{Ranking systems.} A canonical approach to encoding an ordered relation is to assign each item a numerical rank and consistently choose the item with the higher rank. Ranking systems are inherently transitive and thus support transitive generalization, but cannot encode intransitive relations.

Intriguingly, we show below that kernel models learn an emergent ranking system, but combine it with an additional conjunctive response that can memorize exceptions. We characterize the model behavior across the entire family of representations, regularization strengths, and task parameters: 

\begin{theorem}
\label{thm}
    On TI with exceptions with task parameters $n,p,q$ and the training dataset $\mathcal{D}$, the predictions of a kernel model with an exchangeable representation are given by
    \begin{align}
        f(x_{j,k})&:=\begin{cases}
            r_j(\alpha,\tilde{c})-r_k(\alpha,\tilde{c})&\text{ for }x\notin\mathcal{D},\\
            m(\alpha,\tilde{c})y_{j,k}+(1-m(\alpha,\tilde{c}))(r_j(\alpha,\tilde{c})-r_k(\alpha,\tilde{c}))&\text{ for } x\in\mathcal{D},
        \end{cases}\\
        \mbox{where }m(\alpha,\tilde{c})&:=\tfrac{\alpha}{\alpha+\tilde{c}^{-1}},\quad r_j(\alpha,\tilde{c}):=r_j\TI(\alpha,\tilde{c})+r_j^{\mathrm{pert}}(\alpha,\tilde{c})
    \end{align}
    Here $r_j\TI$ is the ranking system emergent for standard TI \cite{lippl2024mathematical} \eqref{eq:rti} and $r_j^{\mathrm{pert}}$ (defined in \eqref{eq:rpert}) is the perturbation to the ranking system arising from the exception $I_p>I_q$ (Fig.~\ref{fig:thm}A). For the training data, the term $m(\alpha,\tilde{c})y_{j,k}$ demonstrates the conjunctive response that can memorize the exception.
\end{theorem}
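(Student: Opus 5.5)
The plan is to decompose the exchangeable kernel into a sum of simpler kernels whose feature maps are explicit. Write
\[
K(x_{j,k},x_{j',k'})=\kappa_d+(\kappa_o-\kappa_d)\bigl(\delta_{jj'}+\delta_{kk'}\bigr)+(\kappa_s-2\kappa_o+\kappa_d)\,\delta_{jj'}\delta_{kk'} .
\]
After dividing by $\kappa_s-\kappa_d$, this is a constant term, plus an additive kernel with weight $\tfrac{1-\alpha}{2}$, plus a conjunctive (one-hot on pairs) kernel with weight $\alpha$. Because the data are antisymmetric, the constant term and any symmetric part drop out of the predictor. The effective model is therefore
\[
f(x_{j,k})=(u_j-v_k)+g_{j,k},
\]
which is ridge regression with penalty $\tilde c^{-1}\bigl(\tfrac{\|u\|^2+\|v\|^2}{1-\alpha}+\tfrac{\|g\|^2}{\alpha}\bigr)$. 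Antisymmetry further forces $u=v=:r'$ and $g_{j,k}=-g_{k,j}$.

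Next I would eliminate the conjunctive part. For fixed $r'$, the optimal $g$ is supported on training pairs. Minimizing $(d_{jk}+g-y)^2+\tfrac{g^2}{\alpha\tilde c}$ pointwise gives $g=m\,(y-d_{jk})$ with $m=\tfrac{\alpha}{\alpha+\tilde c^{-1}}$, where $d_{jk}=r'_j-r'_k$. This immediately yields the stated form of the predictor: $f=d_{jk}$ off $\mathcal D$, and $f=m\,y+(1-m)\,d_{jk}$ on $\mathcal D$. Substituting back, the residual loss in $r$ is
\[
\textstyle(1-m)\sum_{\mathcal D}(r_j-r_k-y_{j,k})^2+\frac{2}{(1-\alpha)\tilde c}\|r\|^2 .
\]
This is a Laplacian-regularized least-squares problem on the training graph, namely the path $1,\dots,n$ plus the extra edge $(p,q)$ carrying the reversed label.

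It then remains to solve $\bigl((1-m)L+\lambda I\bigr)r=(1-m)b$. Here $L$ is the graph Laplacian, $b$ is the divergence of the label field, and $\lambda=\tfrac{2}{(1-\alpha)\tilde c}$ (constants to be tracked). I would split $L=L_{\mathrm{path}}+e e^T$ with $e=e_p-e_q$, and $b=b^{\mathrm{TI}}+b^{\mathrm{exc}}$. The solution without the extra edge and extra label is exactly $r\TI$ from \cite{lippl2024mathematical}. Sherman--Morrison applied to the rank-one update then gives
\[
r^{\mathrm{pert}}=A\inv\Bigl(b^{\mathrm{exc}}-\frac{e\,e^T(r\TI+A\inv b^{\mathrm{exc}})}{1+e^TA\inv e}\Bigr),
\qquad A=(1-m)L_{\mathrm{path}}+\lambda I,
\]
which is a closed-form expression up to the entries of $A\inv$.

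The main obstacle is producing explicit closed forms for these entries, and hence for $r^{\mathrm{pert}}$ as a function of $j,p,q,n$. $A$ is tridiagonal with constant interior diagonal and modified boundary entries, so I would invert it using the standard hyperbolic (Chebyshev-recurrence) formula. Setting $\cosh\theta=1+\tfrac{\lambda}{2(1-m)}$ gives $(A\inv)_{jk}\propto\cosh((\min-\tfrac12)\theta)\cosh((n-\max+\tfrac12)\theta)/\sinh(n\theta)$, which I would then check matches the form of $r\TI$ in \eqref{eq:rti}. The remaining work is careful bookkeeping of the $\alpha,\tilde c$ factors to confirm that the ranking term depends on those parameters only through $\alpha$ and $\tilde c$.
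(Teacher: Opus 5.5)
Your proposal is correct and leads to exactly the paper's formulas, but by a different route. The paper works in the dual. Antisymmetry reduces the $2n$ dual coefficients to $\bar b\in\mathbb{R}^{n-1}$ and a single exception coefficient $\bar h$. The $(n-1)\times(n-1)$ tridiagonal matrix $B$ is inverted with the Hu--O'Connell formula, and $\bar h$ is solved self-consistently; that solve is the dual counterpart of your Sherman--Morrison step. The ranks are then read off as differences $\delta_o(\bar b_j-\bar b_{j-1})$, which forces a lengthy computation of the second differences $D_{ij}$ of $B\inv$. The training-set formula comes from the dual row equations and is written out only for $k=j+1$. You instead work in the primal with the explicit four-hot feature decomposition, which the paper uses only as intuition in Appendix~\ref{sec:encoding}. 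This buys quite a lot. Eliminating the conjunctive coefficients pointwise gives $m$ and the training-set formula for every training pair at once, including $(p,q)$ and $(q,p)$. Your $n\times n$ Neumann-type matrix $\tilde A:=L_{\mathrm{path}}+\tfrac{\lambda}{1-m}I$ has inverse exactly $\tilde D_{jk}/(\sinh\theta\sinh n\theta)$, with your $\theta$ equal to the paper's $\lambda$, so the $D_{ij}$ computation is bypassed entirely. It also helps to note that $b^{\mathrm{exc}}=e_p-e_q=e$, so Sherman--Morrison collapses to $r^{\mathrm{pert}}=\tilde A\inv e\,(1-r_p\TI+r_q\TI)/(1+e^T\tilde A\inv e)$. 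After multiplying through by $\sinh\theta\sinh n\theta$, the denominator is $\sinh\theta\sinh n\theta+\tilde D_{pp}+\tilde D_{qq}-2\tilde D_{pq}=\sinh\theta\sinh n\theta-2\sinh(\tfrac{q-p}{2}\theta)G$, which is \eqref{eq:rpert}. Likewise $\tilde A\inv(e_1-e_n)$ reproduces \eqref{eq:rti}. In exchange, the paper's route stays with the Gram matrix and the standard dual formula, and reuses off-the-shelf results (Hu--O'Connell and the TI lemma of Lippl et al.). There are two harmless constant slips in your write-up. First, the additive kernel has weight $\tfrac{1-\alpha}{2}$, so its penalty is $\tfrac{2(\|u\|^2+\|v\|^2)}{1-\alpha}$, not $\tfrac{\|u\|^2+\|v\|^2}{1-\alpha}$; combined with each edge appearing twice in $\mathcal{D}$, this gives exactly your per-edge system with $\lambda=\tfrac{2}{(1-\alpha)\tilde c}$. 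Second, your Sherman--Morrison formula holds with $A$ replaced by $A/(1-m)=\tilde A$. Your penalty implicitly uses $\tilde c=c\,(\kappa_s-\kappa_d)$, which matches the appendix's convention rather than the main text's stated definition.
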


\textbf{Proof sketch.}
We analytically solve the ridge regression problem (\ref{eq:opt}) through the dual form, which involves inverting the training set kernel $K$. We relate this to the inverse for the original TI problem \cite{lippl2024mathematical}, which allows us to express the analytical solution in terms of a perturbation on top of that solution. We then simplify that solution algebraically. For the detailed derivation, see Appendix~\ref{app:proof}.

\textbf{An encoding model.} To provide an intuition for the behavior of our model, we show in Appendix~\ref{app:intuition} that any exchangeable representation is equivalent, via an orthonormal change of basis, to a four-hot representation concatenating two vectors representing the first and the second item (the ``compositional population''), a vector representing the specific conjunction of the two items (the ``conjunctive population''), and a unit that is constantly active (the ``bias'') (Fig.~\ref{fig:thm}C). Different representational geometries correspond to different weightings of these populations. We can therefore rewrite model behavior in terms of separate readouts from these distinct populations: $f(x_{j,k})=r_j-r_k+t_{j,k}+b$. The readout from the bias term is zero, the readout from the compositional population implements the ranking system, and the readout from the conjunctive population (which we will call the ``conjunctive readout'') implements an additional conjunctive response that is only active on the training set. We note that the conjunctive readout allows the model to memorize the intransitive loop.

\begin{figure}
    \centering
    \includegraphics{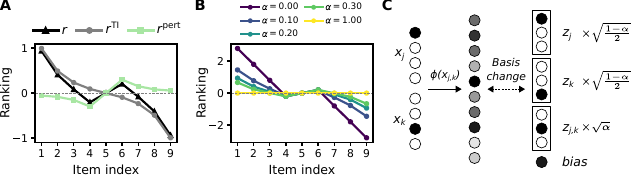}
    \caption{Illustration of the theorem. \textbf{A}, The ranking system consists of the original rank from TI, $r\TI$, plus a perturbation $r^{\mathrm{pert}}$ (example here uses $\alpha=0.2$). \textbf{B}, Example ranking systems (for $n=9,p=6,q=4,\tilde{c}\to\infty$). \textbf{C}, Any exchangeable representation is equivalent, via an orthonormal change of basis, to a four-hot representation where two units represent each input item $x_j$ and $x_k$ separately, one unit represents their conjunction, and one unit is active for all inputs. Different representational geometries are reflected in different weightings of these populations.}
    \label{fig:thm}
\end{figure}

\subsection{The Extreme Cases: Fully Compositional or Fully Conjunctive Representations}
\label{sec:extreme}






To get an intuition for model behavior, we first consider a fully additive model ($\alpha = 0$). This corresponds to the case where only the compositional population is active and the model implements a ranking system on both training and test set (as noted in \cite{lippl2024mathematical}): it infers a rank $r_j$ for each item $I_j$ and computes $f(x_{j,k})=r_j-r_k$.
This is an inherently transitive computation as $f(x_{j,k})>0$ (corresponding to the judgement that $I_j>I_k$) is equivalent to $r_j>r_k$. While the model will therefore learn the transitive pairs, it will necessarily make errors on the intransitive pairs.

More specifically, the model will learn a flat ranking system with respect to the intransitive items, while learning the correct ranking system with respect to all other items (Fig.~\ref{fig:thm}B, purple line). The flat ranking system allows the model to distribute its errors evenly across all intransitive pairs---thus minimizing the mean squared error. This means that an additive representation can generalize both within and across sections, but cannot memorize the exceptions.

In the opposite extreme ($\alpha=1$), only the conjunctive population is active. Such a model can learn arbitrary training data, as it effectively implements a lookup table, $f(x_{j,k})= t_{j,k}$. But it also has no understanding of how different items are related to each other and is therefore unable to generalize.

Overall, the extreme representations ($\alpha=0$ or $\alpha=1$) represent the extreme strategies described in the introduction: either fully rule-based generalization or full memorization. Our insights here are consistent with prior insights on TI and TP. We now turn to the more interesting case of $\alpha\in(0,1)$, which we will see naturally balances between memorization and generalization.

\subsection{Task Performance Depends on Representational Geometry and Regularization}
We use our analytical solution, which is applicable across representations and task parameters, to develop a detailed understanding of training and test performance. 

\begin{figure}
    \centering
    \includegraphics[width=0.7\textwidth]{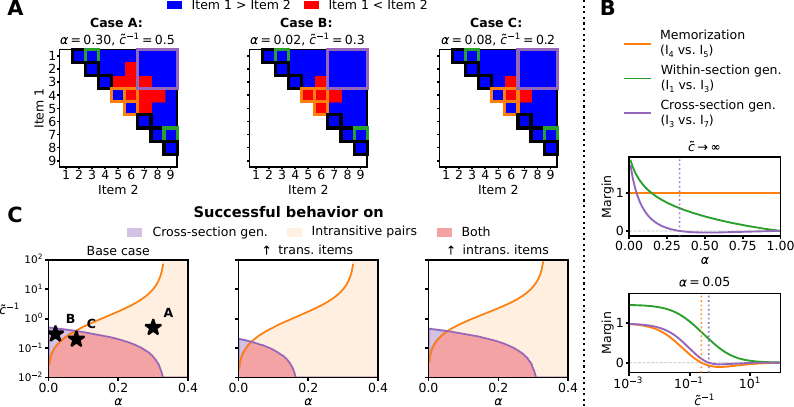}
    \caption{Behavior of the kernel model across task space. \textbf{A}, Generalization behavior for $n=9,p=6,q=4$. We show three distinct parameter settings that lead to: successful memorization, but errors in generalization (case A); successful generalization, but errors in memorization (case B); and successful memorization and generalization (case C). \textbf{B}, Changes in the margin as a function of $\alpha$ or $\tilde{c}$. Values where the margin becomes negative (indicating systematic errors) are highlighted with dashed vertical lines. \textbf{C}, Phase diagrams which show the effect of changing $\alpha$ and $\tilde{c}^{-1}$ on performance.}
    \label{fig:phase}
\end{figure}

\textbf{Universal properties of the ranking system.} We find that the ranking system has a stereotyped shape: it is always monotonically decreasing on the two transitive lists and typically monotonically \emph{increasing} over the intransitive loop (Fig.~\ref{fig:thm}B). We emphasize that this non-monotonicity in the ranking system only arises for $\alpha\in(0,1)$; for both extreme cases, the ranking system is flat on the intransitive loop. This demonstrates that a model balancing between relational generalization and memorization will exhibit new and interesting behaviors that go beyond a simple mixture of a model that can only generalize and a model that can only memorize.

\textbf{Generalization.} Because the ranking system is always monotonically decreasing within the two transitive lists (Appendix~\ref{app:ranking-system}), we can immediately conclude that any $\alpha>0$ will yield \textbf{successful within-section generalization}. For a suitable representational geometry, the model is also capable of \textbf{cross-section generalization} (e.g.\ Fig.~\ref{fig:phase}A, middle and right). However, this is not guaranteed (e.g.\ Fig.~\ref{fig:phase}A, left). 
Specifically, the ranking system's non-monotonocity in the intransitive section propagates into the transitive sections as well: the pre-exception items' ranks are biased downwards and the post-exception items' ranks are biased upwards. For a sufficiently large $\alpha$ or a sufficiently small $\tilde{c}$, this means that the model makes systematic errors in generalization, predicting, in particular, that $I_{q-1}<I_{p+1}$ (Fig.~\ref{fig:phase}B). We emphasize that kernel models never make such mistakes for standard TI (for $\alpha<1$); this highlights that compositional tasks involving exceptions are more challenging than tasks that only require learning a relational structure (such as TI). Accordingly, more compositional representations (with small $\alpha$) may actually be more important for such tasks.


\textbf{Memorization.} For $\alpha\in(0,1)$ and $\tilde{c}\to\infty$, the model will memorize all training data points. However, as $\tilde{c}$ decreases, the model eventually starts making systematic errors on the exception pairs (for all task variants except TP): while it still gets $(I_p,I_q)$ right, it starts getting $(I_q,I_{q+1})$ and $(I_{p-1},I_p)$ wrong (Fig.~\ref{fig:phase}B, bottom). Our analytical solution shows that this is because as $\tilde{c}$ decreases, $m(\alpha,\tilde{c})$ becomes smaller, meaning that the ranking system more strongly influences training set performance. The errors therefore arise from the non-monotonicity in the ranking system. Notably, for smaller $\alpha$, the model makes errors for a broader range of $\tilde{c}$ values. This demonstrates that more conjunctive representations will more easily account for an intransitive structure. We emphasize that on transverse patterning, the model never makes any systematic errors, regardless of $\tilde{c}$. This demonstrates that task structures that partially exhibit relational structure (such as TI with exceptions) can make memorization of exceptions more challenging.

Overall, our analysis demonstrates two things: On the one hand, even a relatively simple learning model can balance relational generalization with memorization of exceptions. On the other hand, this is much more challenging than \emph{only} learning a relation (as in TI), or only having to memorize (as in TP). We summarize these insights in terms of a single diagram demonstrating the regions of the $(\alpha,\tilde{c})$ where the model successfully generalizes and memorizes, and the regions where it fails to do one of the two or both (Fig.~\ref{fig:phase}C, left). A notable and surprising takeaway from our analysis is that $L_2$-regularization (which aids in-distribution generalization) is actually counterproductive for out-of-distribution generalization and can induce systematic errors. Furthermore, our analysis highlights that TI with exceptions gives rise to a trade-off between more conjunctive representations (better for memorization) and more compositional representations (better for generalization).

\textbf{Effects of task parameters.}
We now investigate the impact of changing different task parameters. We focus on cases where the intransitive loop is still located in the middle of the list (i.e.\ $q+1=n-p$), and vary either the length of the two transitive sections (``$\uparrow$ transitive items'') or the length of the intransitive loop (``$\uparrow$ intransitive items'') (Fig.~\ref{fig:phase}C, middle and right). While memorization ability is not strongly affected by either change, additional transitive items renders cross-section generalization much more challenging, as indicated by the area of successful generalization being much smaller. Thus, increasing the length of the transitive sections makes generalization much more challenging, whereas increasing the length of the intransitive loop has a much weaker effect.

\subsection{A Mechanistic Explanation for the Non-Monotonic Ranking System}
\label{sec:intuition}
The central challenge to both cross-section generalization and memorization of intransitive pairs is the non-monotonicity in the ranking system. We now leverage our analytical solution to mechanistically explain this non-monotonicity. We base our intuition around the encoding model we provided above, which expressed model behavior in terms of distinct latent populations responding to single components and component conjunctions: $f(x_{j,k})=r_j-r_k+t_{j,k}$. We express the effect of $L_2$-regularization directly in terms of these components, $C(r,t):=\tfrac{2}{1-\alpha}\|r\|_2^2+\tfrac{1}{\alpha}\|t\|_2^2$ (see Appendix~\ref{app:intuition}).
Intuitively, more additive representations will incur a higher cost for using $t_{j,k}$ and more conjunctive representations will incur a higher cost for using $r_j$.
For simplicity, we focus on $\tilde{c}\to\infty$. Thus, the model needs to match the prescribed labels exactly while minimizing $C(r,t)$.

To gain an intuition for how $C(r,t)$ shapes model behavior, we assume that the conjunctive population is only used where it is necessary, i.e.\ on the intransitive pairs, since a ranking system is transitive and cannot encode them\footnote{In practice, the conjunctive population will also partially take over predictions on the transitive pairs and therefore shrink the ranking system on the transitive section. We here neglect this aspect, to compactly explain the emergence of a non-monotonic ranking system.}. This means that the ranking system necessarily encodes the transitive pairs and thus decreases by one for each item in the transitive sections: $r_{i+1}=r_i-1$. $C(r,t)$ consists of a mixture of $\|r\|_2^2$ and $\|t\|_2^2$. To understand how each part shapes model behavior, we consider the solution that minimizes either just $\|r\|_2^2$ or just $\|t\|_2^2$. In the first case (minimizing $\|r\|_2^2$), each section's emergent ranking system will be centered around zero (Fig.~\ref{fig:intuition}A, solid blue line). This means that the ranking system will be increasing in the intransitive section. This results in systematically incorrect predictions on the intransitive pairs, which the conjunctive population $t_{j,k}$ needs to correct, incurring a high cost $\|t\|_2^2$ (Fig.~\ref{fig:intuition}B, right). On the other hand, if we minimize $\|t\|_2^2$, the ranking system will be flat for all items participating in the intransitive pairs. $t_{j,k}$ still needs to encode the intransitive pairs but incurs a lower cost than for other ranking systems (Fig.~\ref{fig:intuition}B, left). As a result, the transitive sections' ranking systems will be shifted away from zero (Fig.~\ref{fig:intuition}A, dashed blue line), incurring a higher cost $\|r\|_2^2$.

Thus, minimizing $\|t\|_2^2$ yields a flat ranking system, whereas minimizing $\|r\|_2^2$ yields a non-monotonic ranking system. Minimizing a mixture of both costs, $C(r,t)$ therefore finds a compromise between these two systems, resulting in a non-monotonic ranking system, albeit not one as extreme as the parallel ranking systems.

How does $\alpha$ change this tradeoff? For small $\alpha$, memorization is extremely costly whereas the ranking system is comparatively cheap. Thus, the model will focus on minimizing the memorization cost and keep the ranks of the intransitive loop relatively flat. In contrast, for large $\alpha$, memorization is cheap and the ranking system costly, yielding a more non-monotonic ranking system.

While we focused on $\tilde{c} \to \infty$, this intuition extends to smaller $\tilde{c}$ as well. In that case, the model is no longer constrained to exactly encoding the input-output data and rather minimizes a weighted sum of the mean squared error (MSE) and the weight norm $\|w\|_2^2=C(r,t)$ (see \eqref{eq:opt}).  Thus any errors on the intransitive pairs made by the ranking system can be compensated not only by increasing the conjunctive coefficient, but also by allowing an increased MSE (in order to decrease the weight norm $C(r,t)$). This similarly increases the non-monotonicity in the ranking system.

\section{Finetuning Behavior in Pretrained Language Models}
\label{sec:dnns}
\begin{figure}
    \centering
    \includegraphics[width=0.8\linewidth]{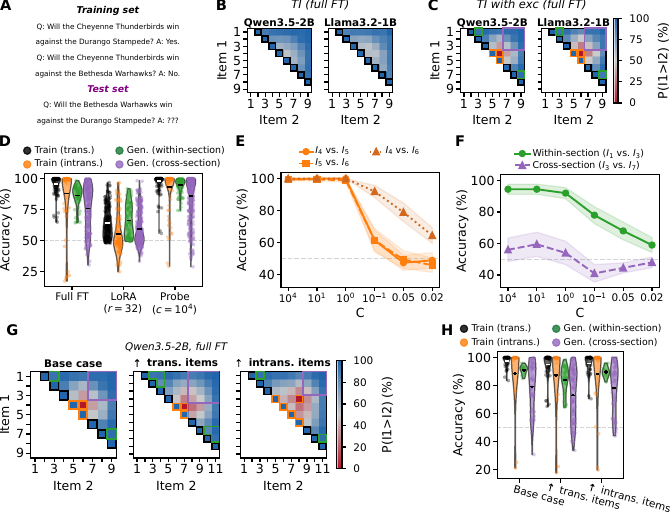}
    \caption{Finetuning pretrained language models (PLMs) on relational data with exceptions. \textbf{A}, Example of the data the models are finetuned on. \textbf{B},\textbf{C}, Generalization behavior of different PLMs on \textit{B}, TI and \textit{C}, TI with exceptions ($n=9, p=6, q=4$). Here we plot performance on fake sports team, other modalities see Figs.~\ref{fig:ti-acc-full}, \ref{fig:exc-acc-full}. For \textit{C}, see Fig.~\ref{fig:task} for the desired generalization. \textbf{D}, Accuracy on distinct subsets of the data across the different finetuning methods. Each dot corresponds to a specific rank comparison, task modality, and model. We summarize the data in terms of the mean and 95\% binomial confidence intervals (often too small to see). \textbf{E}, \textbf{F}, Performance of linear probing in Qwen3.5-4B (fake teams modality) on \textit{E}, intransitive pairs and \textit{F}, generalization pairs. Shaded regions represent 95\% binomial confidence intervals. \textbf{G}, \textbf{H}, Generalization behavior on full finetuning for the base task, a task with additional transitive items ($n=11,p=7,q=5$), or a task with additional intransitive items ($n=11,p=8,q=4$). Heatmaps in \textit{G} show fake teams (see Fig.~\ref{fig:exc-mid}, \ref{fig:exc-far} for other cases), violin plots in \textit{H} pool data across models and modalities.}
    \label{fig:dnns} 
\end{figure}
We now evaluate our theoretical predictions in pretrained language models (PLMs), which we finetune on novel hierarchies. We present these hierarchies in two different formats: either the likely winner between two sports teams (which are sampled from either real team names or fake team names; $n=100$ random seeds per setting), or the likely winner between two hands in Texas Hold'em Poker (using either the ground truth or the exact reverse; $n=50$ random seeds per setting). We present each relation as a question/answer prompt (see Fig.~\ref{fig:dnns}A). We consider three distinct finetuning strategies: 1) full finetuning (full FT), 2) low-rank adaptation (LoRA, \cite{hu2022lora}), and 3) finetuning of the linear readout head with $L_2$-regularization with respect to the initial values (linear probe). We consider four different language models (Llama3.2-1B/3B \cite{touvron2023llama} and Qwen3.5-2B/4B \cite{qwen3.5}; for full finetuning, we only consider the two smaller models). Finally, we evaluate four different relational tasks (TI and three variants of TI with exceptions). This allows us to assess PLMs' relational inference abilities across a broad range of different factors. We summarize our findings here and describe the setup and results in more detail in Appendix~\ref{app:dnns}.

As noted in Section~\ref{sec:model}, we connect the accuracy metrics of the PLMs to the predictions made by the kernel model in two ways: 1) The kernel model making a correct/incorrect prediction is analogous to the PLM behavior having above/below chance accuracy. 2) The overall margin may predict the overall magnitude of the accuracy.

We emphasize that our relational learning task is a practically relevant scenario: Language models are getting exposed to a lot of relational information in their training data. Extending these relations to novel combinations of items would let them generalize to a broad range of new inputs \cite{battaglia2018relational}.

We first evaluated the models on a transitive inference task without any exceptions. We found that the models generalized successfully on this task (Fig.~\ref{fig:dnns}B; Appendix~\ref{app:ti}). Moreover, their generalization behavior shared important characteristics with that of humans and animals (e.g.\ lower accuracy for intermediate item comparisons, see Appendix~\ref{app:behavioral-effects}). Finally, PLMs generalized better on tasks that were consistent with their prior knowledge (i.e.\ poker) than tasks that were inconsistent (i.e.\ poker with a reversed hierarchy), a phenomenon that has previously been observed in both PLMs and humans \cite{evans1983conflict,evans1995belief,dasgupta2022language,lampinen2024language} (see Appendix~\ref{sec:content-effects}). Overall, this demonstrates that language models are able to compositionally generalize over relational data that they were finetuned on and contributes to a growing literature evaluating Transformers and PLMs on transitive inference \cite{wu2025transitive,geerts2025relational,liu2025recoglab}. We note that linear probing and full finetuning generally led to higher performance than LoRA (Fig.~\ref{fig:ti-summary}) \cite{biderman2024lora}.

Next, we considered models finetuned on TI with exceptions. Consistent with our theory, we found that models broadly generalized above chance within-section (Fig.~\ref{fig:dnns}C,D; Appendix~\ref{app:exc}), whereas memorization of the intransitive pairs and cross-section generalization was much more inconsistent, with some models having high performance and others making systematic errors (indicated by accuracies below 50\%). Thus, finetuned PLMs can balance relational generalization and memorization, but some aspects of generalization (pinpointed by our theory) are brittle.

We then considered the impact of regularization. We first considered training set behavior, focusing on the intransitive loop (Figs.~\ref{fig:dnns}E, \ref{fig:exc-sweep}). We found that stronger regularization generally decreased accuracy, with substantially faster decreases on $x_{4,5}$ and $x_{5,6}$ than $x_{4,6}$, as predicted by our theory. We then considered generalization behavior, comparing the impact of regularization on within-section and cross-section generalization (exemplified by $x_{1,3}$ vs.\ $x_{3,7}$, the most challenging generalization pair according to our theory, see Fig.~\ref{fig:phase}B). We again observed the behavior predicted by our theory: as regularization becomes stronger, predictions on $x_{3,7}$ eventually become systematically incorrect (Figs.~\ref{fig:dnns}F, \ref{fig:gen-sweep}). In contrast, within-section accuracy decreases as regularization increases (consistent with the kernel model's output becoming smaller), but never goes below chance.

Finally, we varied the task parameters, either increasing the number of generalizable items ($n=11,p=7,q=5$) or increasing the length of the intransitive loop ($n=11,p=8,q=4$). Consistent with our theory, we found that increasing the number of generalizable items yielded a stronger decrease in performance than increasing the number of intransitive items (Fig.~\ref{fig:dnns}G,H, Fig.~\ref{fig:task-sweep}). Overall, these results indicate that our theory captures key aspects of how pretrained language models balance relational generalization and memorization.
\section{Discussion}
Complex behavior in the real world requires not just inferring general rules, but also memorizing exceptions to those rules. To study how learning systems balance these opposing demands, we have introduced a novel task paradigm grounded in the relational learning literature---transitive inference with exceptions. We analytically characterized the behavior of kernel models on this task, finding that a successful balance between relational generalization and memorization was sensitive to the models' representational geometry, and subsequently validated our theory in pretrained language models finetuned on novel relations.

Our theory highlights (and our experiments confirm) that exceptions render successful generalization much more challenging. This highlights that task paradigms like TI with exceptions can 1) help us understand the conditions under which learning systems generalize on such complex tasks and 2) can serve as a benchmark for the development of learning systems that avoid their pitfalls. Better learning systems could, for example, separate the learning of general structure and exceptions into distinct streams of processing \cite{schapiro2017complementary} or use meta-learning to imbue the appropriate inductive biases \cite{lake2023human}.

We focused on kernel models with exchangeable representations. Kernel models are an influential model in deep learning theory, but simplify important aspects of neural network learning. Future work should investigate how feature-learning neural networks generalize on TI with exceptions.
Moreover, exchangeable representations capture the idea that we should be able to relationally generalize over different items without any particular knowledge about the items themselves; by relaxing this assumption, we could investigate how prior knowledge (e.g.\ about poker) could affect relational reasoning \cite{dasgupta2022language,domine2023exact,lampinen2024language}. Finally, we have considered one specific instance of compositional generalization. Expanding this to a broader battery of tasks could highlight important distinctions and convergences in balancing generalization and memorization across different compositional structures. Our conceptual and theoretical framework provides a foundation for such future advances. Overall, we suggest that explicitly modeling the effect of exceptions to general rules is critical for understanding and improving compositional generalization in the real world.

\section*{Acknowledgments}
We thank Larry Abbott, Ethan Hwang, Chris Iyer, Berkan Ottlik, Erica Shook, and Marcus Triplett for helpful discussions and comments. We thank Justin Buck for help in brainstorming intransitivities in games and Kollin Wasserlein for generating fictional sports team names. This work was supported by a grant from the Simons Foundation International 542939SPI, LFA; SFI-AN-NC-GB-Culmination-00003215-01.
\printbibliography

\newpage
\appendix
\renewcommand{\thefigure}{S\arabic{figure}} 
\renewcommand{\theequation}{S\arabic{equation}}
\section{Extended Related Work}
\label{app:related}
\textbf{Compositional generalization.}
As modern machine learning models have become increasingly capable, generalizing to unseen situations has become an increasingly important challenge \cite{abbe2024generalization,press2023measuring,berglund2024the}. Compositional or systematic generalization tests whether models are able to generalize to novel combinations of familiar components \cite{hupkes2020compositionality}. In deep neural networks, explicit, pre-specified constraints on the compositional operation can guarantee compositional generalization \cite{jarvis2023on,wiedemer2023compositional,wiedemer2024provable,brady2025interaction,schug2024discovering}; alternatively, these compositional principles can be meta-learned \cite{wu2023adaptive,lake2023human,redhardt2025scaling} (though see \cite{pmlr-v140-mitchell21a}). These studies usually consider tasks that follow an exact rule; we introduce a task paradigm that highlights the need for compositional generalization even in cases where explicit constraints would not work.

\textbf{Kernel models.} As described in Section~\ref{sec:model}, linear readout models trained with $L_2$-regularization (``ridge regression'') can be understood in terms of the kernel induced by their representation. Ridge regression is both a common statistical learning method itself and captures the behavior of other popular learning algorithms. In particular, linear readout learning with gradient descent (from zero initialization) converges to the same solution \cite{soudry2018implicit,gunasekar2018characterizing,ji2020gradient}. Moreover, an influential line of work in deep learning theory approximates neural networks by their first-order Taylor expansion (the ``neural tangent kernel''). For large initial weights or wide neural networks, this approximation remains accurate throughout training, meaning that kernel models capture neural network learning exactly in this regime \cite{jacot2018neural,chizat2019lazy}. As a result, kernel models have been a highly influential framework for theoretically understanding neural network learning \cite{canatar2021out,canatar2021spectral,mallinar2022benign,zhou2024an,karkada2026predicting}, including in compositional generalization settings \cite{abbe2024generalization,lippl2025when}. In finetuning, in particular, weight changes are often very small, suggesting that kernel models might be a particularly relevant framework \cite{malladi2023kernel,afzal2026linearization}. Our findings support this, as our theory accurately captures finetuning behavior in pretrained models. In other contexts, inductive biases that are not captured by kernel models (and that our theory thus doesn't speak to) play an important role for generalization \cite{chizat2020implicit,woodworth2020kernel,vyas2023empirical,boix2023can,tong2025learning}, including in finetuning \cite{lampinen2018an,kumar2022fine,lippl2024inductive,anguita2026theory}.

\textbf{Transitive inference and transverse patterning.} Transitive inference (TI) paradigms assess whether subjects are able to use the transitive rule to generalize learned relations. This is a canonical task for studying relational generalization in humans and animals (from monkeys and rats to wasps and fish) \cite{bryant1971transitive,mcgonigle1977monkeys,davis1992transitive,grosenick2007fish,tibbetts2019transitive} (see also related task paradigms, e.g.\ \cite{zeithamova2010flexible,daw2011model,momennejad2017successor,yang2024large,tong2026boule}). Prior work has modeled these choice behaviors through models learning a ranking system \cite{jensen2015implicit,jensen2019discovering,ciranka2022asymmetric}, a strategy that intriguingly also emerges in kernel models trained on this task \cite{nelli2023neural,lippl2024mathematical}. Notably, animals' predisposition to ranking different items is so strong that they arrive at this strategy even in cases where trial-by-trial feedback is completely random (``superstitious learning'') \cite{jin2022superstitious}. Transverse patterning is a common task for studying whether animals can learn nonlinear relations \cite{alvarado1992some,astur1998configural,dusek1998hippocampus}. \cite{graham2025asymmetric} investigate the effect of a later change of the overall ranking system and whether humans are able to update their ranking system. Their setup still assumes a global ranking system without any exception, just one that changes over time. Investigating the interactions between intransitive loops (as in our setup) and changing relational orders (as in their setup) would be an interest subject of future work.

TI has been used to assess relational generalization in neural networks \cite{de2001transitive,nelli2023neural,di2024geometrical,lippl2024inductive,kay2024emergent}, including in-context learning in transformers and pretrained language models (PLMs) \cite{geerts2025relational,liu2025recoglab,wu2025transitive}. We expand on this work by studying relational generalization as a result of in-weight learning in PLMs.

\textbf{Intransitive loops in relational orders.}
Intransitive loops commonly arise when a multi-dimensional comparison structure is projected into a low-dimensional space \cite{trybula1961paradox,bar1988vicious,klimenko2015intransitivity}. A canonical example is inter-species competition, where complementary strengths and vulnerability might mean that A out-competes B, B out-competes C, and C out-competes A \cite{may1975nonlinear,gilpin1975limit,buss1980competitive,sinervo1996rock,huisman1999biodiversity,kerr2002local,soliveres2018everything}.
This has been observed in many different natural environments \cite{buss1980competitive,huisman1999biodiversity,sinervo1996rock,soliveres2018everything} and may support co-existence of different species within the same niche \cite{laird2006competitive}.
Further, ranked-choice voting system potentially give rise to intransitive preference orderings \cite{arrow1950difficulty}. In game theory, these concepts have been explored with ``intransitive dice'' \cite{Savage01051994,conrey2016intransitive}; along these lines, we expand on a prior analysis of choice intransitivity in poker \cite{bartholdi2025topology}. Overall, this highlights that potential intransitivity is ubiquitous in ordered (and at first glance transitive) relations in complex mathematical systems and the real world. Relatedly, \cite{PAHIKKALA2010676} previously developed a specific kernel method for learning intransitive relations. While they focused on empirically fitting benchmark datasets, we analytically characterize the out-of-distribution generalization performance arising from a very similar kernel approach that can be used to characterize the generalization behavior of a random features model.

\textbf{Benign overfitting.}
The closest concept in classical in-distribution generalization to exceptions is label noise. While memorizing label noise does not have as clear benefits as memorizing exceptions, many modern machine learning algorithms nevertheless overfit their training data. Intriguingly, while classical perspectives suggest that overfitting makes generalization worse, recent work has identified certain regimes (including ridge regression with certain types of kernels) in which networks overfit their data without becoming worse at generalization (``benign overfitting'') \cite{bartlett2020benign,chatterji2022interplay,mallinar2022benign,tsigler2023benign,barzilai2026beyond}. Our work here emphasizes that ``almost additive'' representation exhibit benign memorization of exceptions, but only for no explicit regularization.
\section{Encoding Model and Mechanistic Intuition}
\label{sec:encoding}
\subsection{Encoding Model}
We define a four-hot representation that represents any item $x_{j,k}$ as 
\begin{equation}
    z=\begin{pmatrix}
        s_1z_j^T&s_1z_k^T&s_2z_{j,k}^T&s_0z_b
    \end{pmatrix}\in\mathbb{R}^{2n+n^2+1},
\end{equation}
where $z_j\in\mathbb{R}^n$ is the one-hot vector with an entry in the $j$-th position and $z_{j,k}$ is a one-hot vector with an entry in the $(j-1)n+k$-th position (or more intuitively, the one-hot vector representing each specific combination of items). Finally $z_b=1$. For a representation $H\in\mathbb{R}^{n^2\times h}$ of our pairs of items with associated $k_d,k_o,k_s$, we set
\begin{equation}
    s_0=\sqrt{k_d},\quad s_1=\sqrt{k_o-k_d},\quad s_2=\sqrt{(k_s-k_d)-2(k_o-k_d)}.
\end{equation}
We write the complete feature matrix of $z$ as
\begin{equation}
    Z\in\mathbb{R}^{n^2\times 2n+n^2+1}
\end{equation}
Then, $HH^T$, i.e.\ the kernel induced by $H$, is equivalent to the kernel induced by the four-hot representation, $ZZ^T$. The original representation and the four-hot representation have the same inner-product geometry over the represented inputs (note that this is a direct consequence of the fact that both matrices have identical left-singular vectors and singular values):
\begin{equation}
    H=ZR,\quad Z=HR^T,
\end{equation}
where $R\in\mathbb{R}^{(n^2\times 2n+n^2+1)\times h}$ is an orthonormal matrix. (Note that $R$ is not square and we mean more specifically a matrix with orthogonal rows if $h\geq(n^2\times 2n+n^2+1)$ and orthogonal columns if $h\leq(n^2\times 2n+n^2+1)$.)
\subsection{Mechanistic Intuition}
\label{app:intuition}
We note that we can rewrite readouts from $H$, as readouts from $Z$:
\begin{equation}
    f(x_{j,k})=H_{j,k}w=Z_{j,k}\tilde{w}\mbox{ for }\tilde{w}:=Rw.
\end{equation}
Because $R$ is an orthonormal matrix in the sense defined above,
\begin{equation}
    \|\tilde{w}\|_2^2=\|w\|_2^2.
\end{equation}
We now split up $\tilde{w}$ into the ranking system, the conjunctive coefficient and the bias:
\begin{equation}
    \tilde{w}=\begin{pmatrix}
        \tilde{r}^{(1)T}&\tilde{r}^{(2)T}&\tilde{t}^T&b
    \end{pmatrix}.
\end{equation}
We note that by the same symmetry arguments used in our proof, $\tilde{r}^{(1)}=-\tilde{r}^{(2)}=:\tilde{r}$ and $b=0$. Thus, we can write
\begin{equation}
    f(x_{j,k})=s_1(\tilde{r}_j-\tilde{r}_k)+s_2\tilde{t}_{j,k}.
\end{equation}
We now set
\begin{equation}
    r_j:=s_1\tilde{r}_j,\quad t_{j,k}:=s_2\tilde{t}_{j,k}.
\end{equation}
Conveniently, this allows us to express the costs on $w$ in terms of costs on the effective ranking system and the conjunctive coefficient:
\begin{align}
\begin{split}
    \|w\|_2^2=s_1^{-2}\|r\|_2^2+s_2^{-2}\|t\|_2^2&=\frac{1}{k_o-k_d}\|r\|_2^2+\frac{1}{(k_s-k_d)-2(k_o-k_d)}\|t\|_2^2\\&\propto\frac{k_s-k_d}{k_o-k_d}\|r\|_2^2+\frac{k_s-k_d}{(k_s-k_d)-2(k_o-k_d)}\|t\|_2^2.
\end{split}
\end{align}
This means that
\begin{equation}
    \|w\|\propto C(r,t):=\frac{2}{1-\alpha}\|r\|_2^2+\frac{1}{\alpha}\|t\|_2^2,
\end{equation}
which demonstrates that we can express the weight norm in terms of a relative weight on the ranking system and the conjunctive coefficients. This allows us to compare the relative incentives enforced by these different terms, as in Fig.~\ref{fig:intuition} and Section~\ref{sec:intuition}.
\begin{figure}
    \centering
    \includegraphics{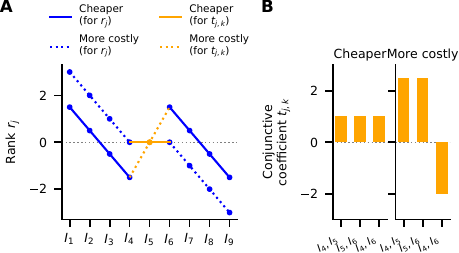}
    \caption{Illustration of the intuition. \textbf{A}, Two possible ranking systems, one of which is more costly and one of which is cheaper. \textbf{B} The values of the conjunctive coefficient, in order to encode the intransitive loop given the two different ranking systems. If we prioritize costs on the ranking system, we want to center the ranks of the section before and after the intransitive loop (solid blue line as opposed to dotted blue line). Thus, wanting to minimize the norm of the ranking system confers an inductive bias towards non-monotonicity. However, this means that the conjunctive coefficients need to counteract the ranking systems predictions, which incurs a substantially larger norm. To minimize the norm of the conjunctive coefficients, we want to keep the intransitive section flat (solid orange line as opposed to dotted orange line). This confers successful cross-section generalization. Because in practice we optimize a mixture of the ranking system and memorization cost, we always arrive at a non-monotonic ranking system, albeit a more attenuated one. A more compositional representations means that the conjunctive coefficient is more costly and hence the model prioritizes a flat ranking system. A more conjunctive representation means that the ranking system is more costly and hence the model prioritize a more non-monotonic ranking system.}
    \label{fig:intuition}
\end{figure}
\section{Experiments with Pretrained Language Models}
\label{app:dnns}
\subsection{Detailed Setup}
\subsubsection{Task modalities.}
We consider four task modalities: sports teams (either real or fake) and poker hands (either correct or reversed).

\paragraph{Sports teams.} In this case, the relation is whether team A is likely to win against team B, presented as ``Q: Will the [team A] win against the [team B]? A: [Yes/No].'' To ensure that the models are biased to responding with Yes or No rather than a different format even before pretraining, we provide two ``anchor questions'' that indicate the format. To this end, we randomly sample a winning and a losing team (neither of which are a part of the actual hierarchy) and present them as additional preceding question. In total, this gives rise to the following format:
\begin{quote}
    Q: Will the [anchor winner] win against [anchor loser]? A: Yes.\\
    Q: Will the [anchor loser] win against [anchor winner]? A: No.\\
    Q: Will the [team A] win against [team B]? A: [Yes/No].
\end{quote}
We sampled teams from one of two lists. To minimize the degree of prior expectation on this relation, we considered sports teams with fictional names (``fake teams''). To control for any potential confusion arising from such fictional names, we additionally also considered real MLB teams (``real teams''). Below we list all names used in either of these modalities:
\begin{itemize}
    \item \textbf{Fake teams:} The Harlem Renaissance, Marysville Mudskippers, Ogden Railspikes, Cheyenne Thunderbirds, Colorado Springs Quake, Fort Huachuca Javelina, Honolulu Pu'ali, Cambridge Charlies, Bethesda Warhawks, Grand Junction Dunes, Biloxi Shrimp, Ocean Springs Sandhill Crane, Gulfport Swampmen, Everett Yeti, McLean Majors, Morningside Midnight Run, Pawnee Fire, Carson City Cougars, Boise Noise, Silverton Slides, Durango Stampede, Ouray Morays, Munich Monks, Hamburg Raiders, Cologne Horribles, Mount Erie Field Devils, Waynesboro Knights, Oakland Visigoths, Gainesville Gators, Des Moines Shuckers
    \item \textbf{Real teams:} Boston Red Sox, New York Yankees, Toronto Blue Jays, Chicago White Sox, Cleveland Guardians, Detroit Tigers, Kansas City Royals, Minnesota Twins, Houston Astros, Los Angeles Angels, Oakland Athletics, Seattle Mariners, Texas Rangers, Atlanta Braves, Miami Marlins, New York Mets, Philadelphia Phillies, Washington Nationals, Chicago Cubs, Cincinnati Reds, Milwaukee Brewers, Pittsburgh Pirates, St. Louis Cardinals, Arizona Diamondbacks, Colorado Rockies, Los Angeles Dodgers, San Diego Padres, San Francisco Giants
\end{itemize}
\paragraph{Poker hands.} The possible hands are any pair of two cards, represented by their ranks (written in order as A, K, Q, J, T, 9,..., 2) as well as whether they have matching suits (indicated by a ``s'') or not (indicated by a ``o''). If the two ranks are matched, we know that their suits are not matched and we don't specify that further. For example an ace and a queen with different suits as represented as AQo. We note that this is standard poker terminology and we therefore expect that the language models are familiar with it. We consider as our relation whether hand A is more likely to win against hand B in a one-on-one match-up (``winningness'') (see Appendix~\ref{sec:poker}). We present the relation in the following format:
\begin{quote}
    We consider pre-flop all-in heads-up.\\
    Q: Is AA likely to win against 72o? A: Yes.\\
    Q: Is 72o likely to win against AA? A: No.\\
    Q: Is [hand A] likely to win against [hand B]? A: [Yes/No].
\end{quote}
Unlike for the sports team, we use the ground truth (according to an equity matrix we generated using \texttt{eval7}, see Appendix~\ref{sec:poker}). We either consider that relation directly, or invert it (i.e.\ such that worse hands are indicated as winning against better hands, ``poker (reversed)''). This allows us to evaluate the degree to which real-world information influences the pretrained language models' judgment. To sample the relevant relations, we specifically sample cases where the winning hand has a probability between 51\% and 60\% of winning against the losing hand, so as to make the task sufficiently challenging. For TI, we make sure to sample an actually transitive hierarchy, checking each pairwise relation. In contrast, for TI with exceptions, we only control for the training set pairs. We sample these sets by first sampling a particular hand and then sequentially generating the next hand such that it satisifes the constraints of the hierarchy. If no hand satisfies those constraints, we consider a different sample of starting hand and start again. We confirmed that models had some (but not much prior knowledge) about the poker hands (Fig.~\ref{fig:acc-pre}).
\begin{figure}
    \centering
    \includegraphics[width=\linewidth]{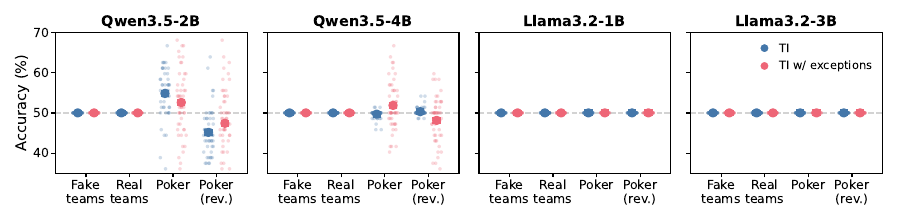}
    \caption{Accuracy for all models prior to finetuning for all four tasks. Line ranges represent 95\% binomial confidence intervals (may be too small to see).}
    \label{fig:acc-pre}
\end{figure}
\paragraph{Task hierarchies}
We consider four task hierarchies: TI ($n=9$) and TI with exceptions ($n=9,p=6,q=4$, $n=11,p=7,q=5$, and $n=11,p=8,q=4$). In total, this means that we consider sixteen different tasks (4 task hierarchies combined with four task modalities).
\paragraph{Evaluation}
We consider $n=100$ randomly generated datasets for the real and fake sports teams and $n=50$ randomly generated datasets for the regular and reversed Poker hierarchies. To evaluate the models, we both compute the probability on each training and generalization pair that the assign on ``yes'' vs. ``no.'' For the cases where we have a ground truth, we additionally determine whether the model was correct (i.e.\ assigned a higher probability to the correct answer). We compute accuracies by averaging over all random seeds.
\subsubsection{Model setup}
We consider three finetuning strategies: full finetuning, low-rank adaptation (LoRA), and linear probing. In all cases, we use AdamW \cite{loshchilov2018decoupled} with parameters $\beta_1=0.9,\beta_2=0.999$ and a weight decay of $10^{-2}$ (as is default in the PyTorch implementation). We emphasize that our objective here was not to find the hyperparameters that worked best, but rather to use representative parameters to understand the default behavior arising from finetuning. We train on crossentropy over the yes/no tokens specifically and do not use the rest of presented sequence as part of our objective.

We consider four recent pretrained language models: Llama3.2-1B/3B and Qwen3.5-2B/4B. We only apply full finetuning to the smaller two models, giving rise to ten total finetuning setups.
\paragraph{Full finetuning.}
We finetune all weights in the model, with a learning rate of $10^{-6}$ (for Qwen3.5-2B) and $2\cdot 10^{-6}$ (for Llama3.2-1B). We use a batch size of 32 and finetune for 100 epochs.
\paragraph{Low-rank adaptation (LoRA).} We apply LoRA to all parameters in the attention and MLP layers. We separately consider ranks $r=4,8,16,32,64$ and set $\alpha=2r$. By default, we consider $r=32$, but we analyze the effect of different ranks below. We use a batch size of 16 and finetune for 10 epochs.
\paragraph{Linear probing.} We only train the linear readout head of the model, using $L_2$-regularization with respect to the readout weights. For the inverse scaling factor of the regularization $c$, we use the values $c=10^4,10,1,0.1,0.05,0.02$. By default we consider extremely small regularization, $c=10^4$. We use a batch size of 16 and train for 50 epochs.

\subsubsection{Reproducibility}
The code for reproducing the experiments described in Section~\ref{sec:dnns} can be found here: \url{https://github.com/sflippl/ti-with-exceptions}. We ran all experiments on a Slurm cluster using A40 GPUs. Each experiment (which involved running 100 seeds) took between 2--6h. We ran a total of 464 experiments requiring between 1000-2700h of compute in total.
\subsection{Transitive Inference}
\label{app:ti}
We first trained models on transitive inference ($n=9$). To evaluate model behavior on this task, we draw a distinction between three types of pairs: adjacent items (which the model was trained on), non-adjacent items that include a terminal item (``Gen. (terminal)'') and non-adjacent items that only include internal items (``Gen. (internal)''). This is motivated by the fact that the former generalization set can be solved by simply learning to always indicate that the first item is larger and that the last item is smaller, without requiring actually relating the items to each other. In contrast, internal items are necessarily context-dependent in their relation. Thus, transitive generalization is more accurately captured by the ability to transitively generalize on the internal items.
\paragraph{Full finetuning yielded successful transitive generalization.}
Figure~\ref{fig:ti-acc-full} depicts the accuracy of models fully finetuned on TI (further, see Fig.~\ref{fig:ti-summary}A, top row, for summarized performance). We find that the models broadly generalize above chance across all task modalities. 
\begin{figure}
    \centering
    \includegraphics{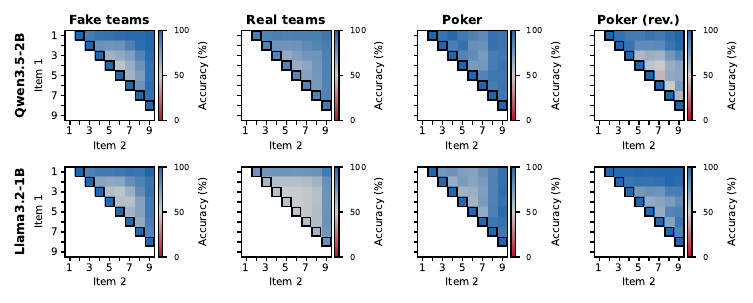}
    \caption{Accuracy for all models and tasks for full finetuning.}
    \label{fig:ti-acc-full}
\end{figure}
\paragraph{LoRA.}
We found that low-rank adaptation largely yielded more modest gains in performance. For most cases, transitive generalization (including on the internal items) was still above chance, however. We note that LoRA performed particularly badly on Llama3.2-1B. It is possible that further hyperparameter optimization or longer training would yield stronger benefits from LoRA.

We then evaluated the impact of varying rank on generalization (Fig.~\ref{fig:ti-summary}B). We found that intermediate ranks (16 or 32) tended to perform best. However, for no parameter choices did the models exhibit systematic errors (i.e.\ below-chance accuracy). This is consistent with the fact that kernel models also never make such systematic errors.
\begin{figure}
    \centering
    \includegraphics{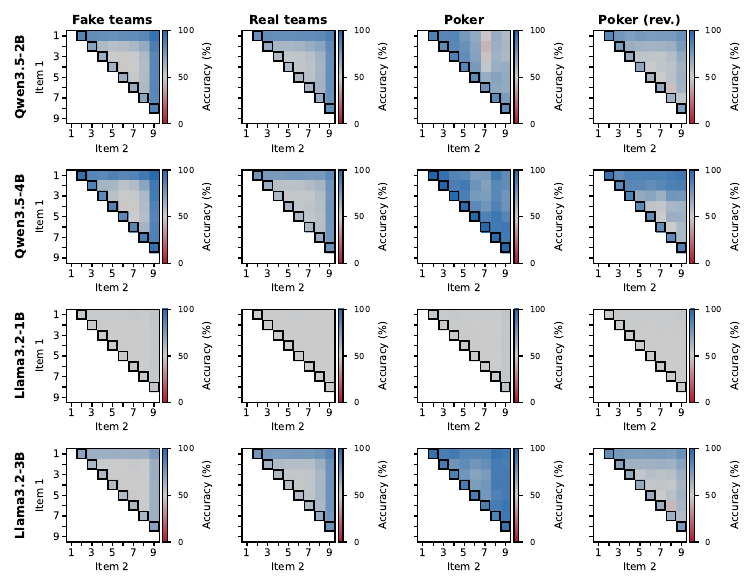}
    \caption{Accuracy for all models finetuned with LoRA (rank=32).}
    \label{fig:ti-acc-lora}
\end{figure}
\paragraph{Linear probing.}
Finally, we again observed strong transitive generalization for linear probing (Fig.~\ref{fig:ti-summary}C). We found that stronger generalization decreased performance on both training and generalization. Again, model performance never went below chance.
\begin{figure}
    \centering
    \includegraphics{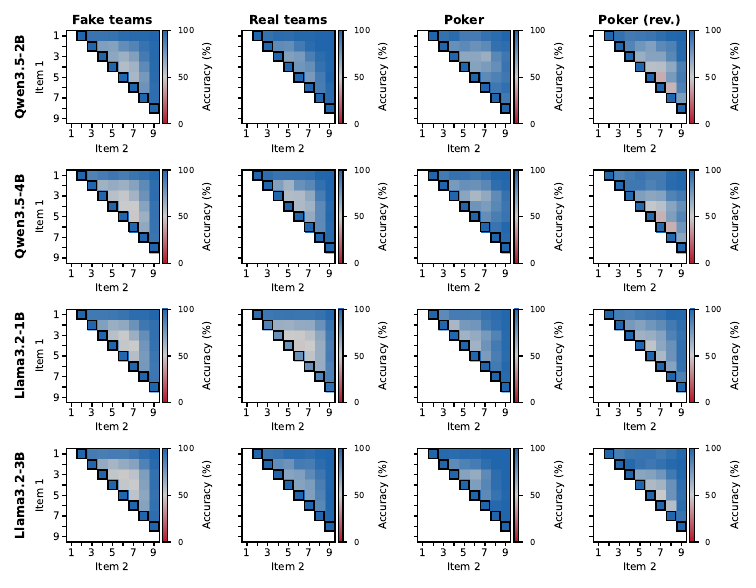}
    \caption{Accuracy for all models finetuned with the linear probe ($c=10^4$).}
    \label{fig:ti-acc-probe}
\end{figure}
\subsection{Consistent behavioral effects on transitive inference}
\label{app:behavioral-effects}
On both full finetuning (Fig.~\ref{fig:ti-acc-full}) and linear probing (Fig.~\ref{fig:ti-acc-probe}), we observed that generalization on the sports teams becomes better as a function of the symbolic distance between the two items (i.e.\ the distance of their ranks). Moreover, the models are also better at relatively more terminal items than more internal items. Notably, all of these behavioral effects parallel effects robustly observed in humans and animals as well \cite{vasconcelos2008transitive,jensen2017serial}. On the poker task modalities, behavior in the case of full finetuning looks a little different and models are better at generalizing for later items in the hierarchy (or, identically, earlier items in the reversed hierarchy). We speculate that this is a consequence of the prior knowledge the language models have about poker. In contrast, for linear probing, we still observe the same behavioral effects for the poker modality, indicating that prior knowledge affects linear probing differently from full finetuning.

\begin{figure}
    \centering
    \includegraphics{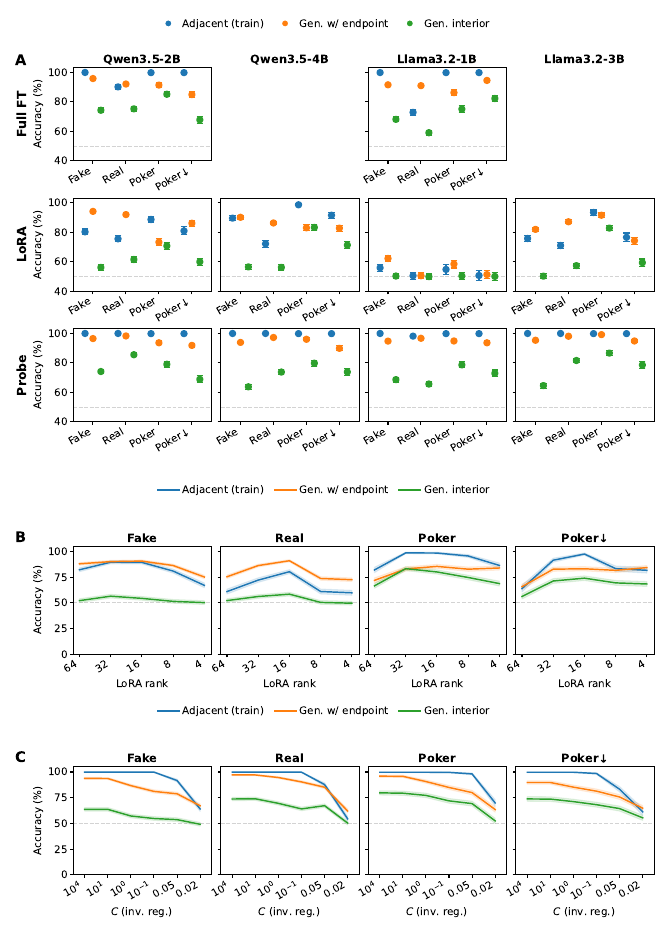}
    \caption{Summary of behavior on TI. Shaded regions/line ranges represent 95\% binomial confidence intervals (may be too small to see). \textbf{A}, Accuracy before and after finetuning across all tasks and for different finetuning methods. \textbf{B}, Accuracy on LoRA as a function of rank. \textbf{C}, Accuracy on the linear probe as a function of the regularization coefficient.}
    \label{fig:ti-summary}
\end{figure}
\subsection{Transitive inference with exceptions (base case)}
\label{app:exc}
To complement Fig.~\ref{fig:dnns}D in the main text, Figs.~\ref{fig:exc-acc-full},\ref{fig:exc-acc-lora},\ref{fig:exc-acc-probe} depict generalization performance across the different finetuning methods, task modalities, and models. As indicated by Fig.~\ref{fig:dnns} within-section generalization is highly reliable, whereas cross-section generalization is variable and sometimes even below chance. We again observe that LoRA performs much worse on the task. We summarize the overall performance in Fig.~\ref{fig:task-sweep}, which shows, in particular, that models are substantially worse at cross-section generalization than within-section generalization.
\begin{figure}
    \centering
    \includegraphics{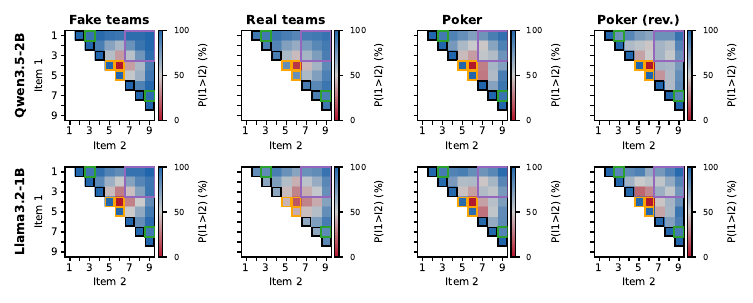}
    \caption{Model predictions for all tasks and models for full finetuning.}
    \label{fig:exc-acc-full}
\end{figure}
\begin{figure}
    \centering
    \includegraphics{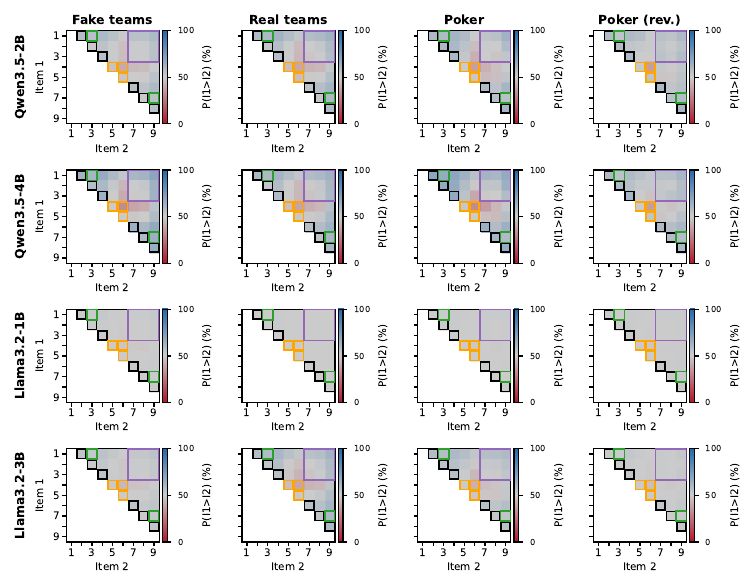}
    \caption{Model predictions for all tasks and models for LoRA (rank=32).}
    \label{fig:exc-acc-lora}
\end{figure}
\begin{figure}
    \centering
    \includegraphics{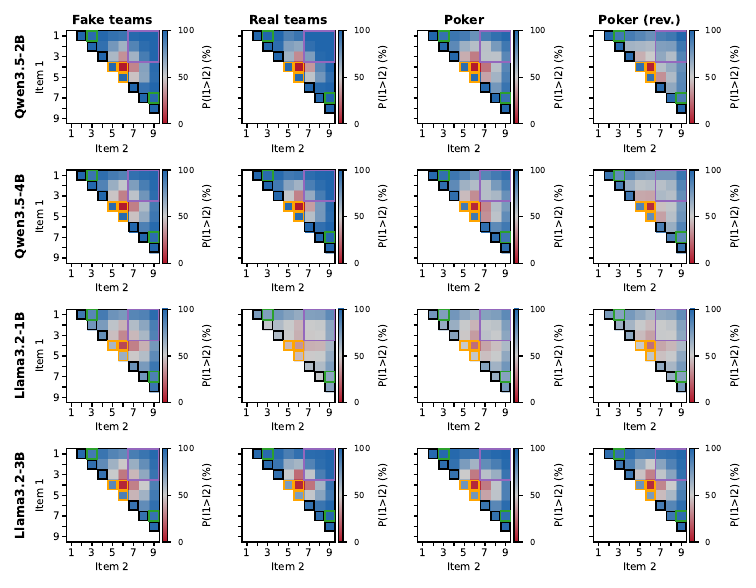}
    \caption{Model predictions for all tasks and models for linear probing ($c=10^4$).}
    \label{fig:exc-acc-probe}
\end{figure}
\begin{figure}
    \centering
    \includegraphics{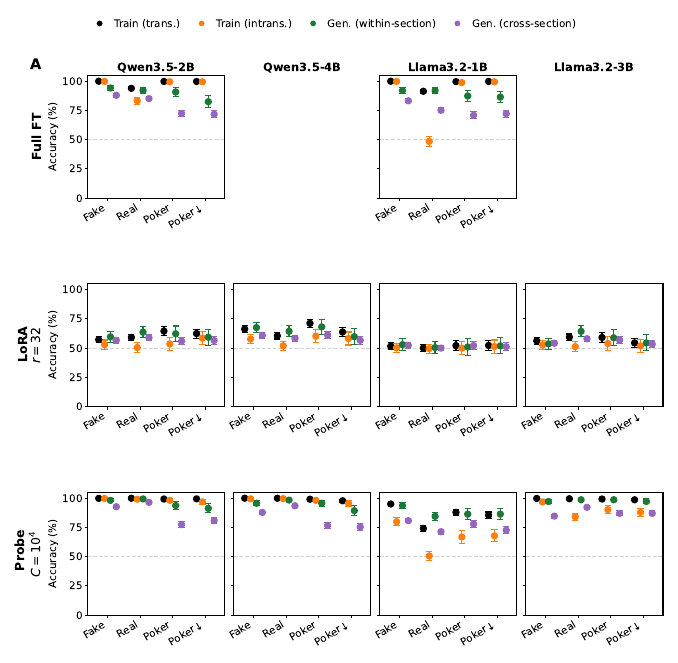}
    \caption{Accuracy on the different training and generalization splits, across tasks, models, and finetuning methods. Line ranges represent 95\% binomial confidence intervals (may be too small to see). }
    \label{fig:task-sweep}
\end{figure}

To complement Fig.~\ref{fig:dnns}E, we find that across all models and task modalities, training performance on $I_4$ vs.\ $I_5$ and $I_5$ vs.\ $I_6$ is lower than on $I_4$ vs.\ $I_6$, as predicted by our theory (Fig.~\ref{fig:exc-sweep}). Moreover only the former two data points are ever below chance---again as predicted by our theory. We note that on the linear probe, the regularization coefficient behaves highly similar to the predictions made by the kernel model: as regularization becomes stronger, performance on the training data points becomes worse. (This is perhaps not too surprising given the close analogy between these models.) In contrast, rank in LoRA does not have such a consistent effect. Rather, performance often becomes better for lower rank. Thus, the effects of varying the rank in LoRA are not captured by our theoretical model. However, we emphasize that the coarse trends we highlighted above still hold true.
\begin{figure}
    \centering
    \includegraphics{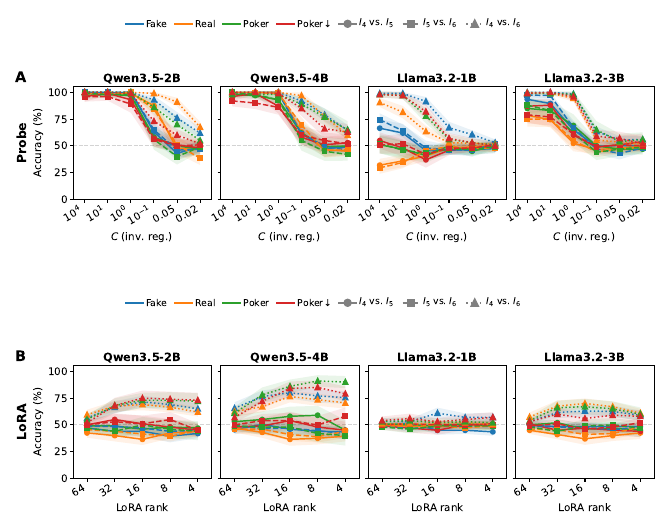}
    \caption{Same plot as Fig.~\ref{fig:dnns}E across different task modalities, models, and finetuning methods. Specifically, we vary $c$ (for the linear probe) and we vary the rank (for LoRA). We then compute accuracy for the three distinct intransitive pairs. Shaded regions represent 95\% binomial confidence intervals (may be too small to see). }
    \label{fig:exc-sweep}
\end{figure}

Finally, to complement Fig.~\ref{fig:dnns}F, we find that across all models and task modalities, generalization performance within-section is better than cross-section (Fig.~\ref{fig:exc-sweep}). Moreover, only cross-section generalization sometimes goes below chance. On linear probing, $c$ generally has the effect predicted by our theory: both within-section and cross-section generalization performance decreases for stronger regularization and for cross-section generalization, performance can even intermittently go below chance.
\begin{figure}
    \centering
    \includegraphics{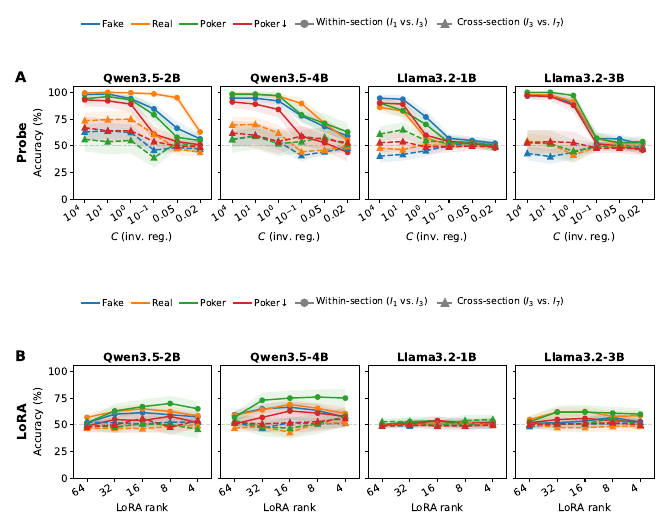}
    \caption{Same plot as Fig.~\ref{fig:dnns}F across different task modalities, models, and finetuning methods. Specifically, we vary $c$ (for the linear probe) and we vary the rank (for LoRA). We then compute accuracy for the within-section generalization pair and the most challenging cross-section generalization pair. Shaded regions represent 95\% binomial confidence intervals (may be too small to see).}
    \label{fig:gen-sweep}
\end{figure}
\subsection{Varying task parameters}
To expand on Fig.~\ref{fig:dnns}G,H, Figs. \ref{fig:exc-mid} and \ref{fig:exc-far} depict the generalization behavior on the two additional task variants.
\begin{figure}
    \centering
    \includegraphics{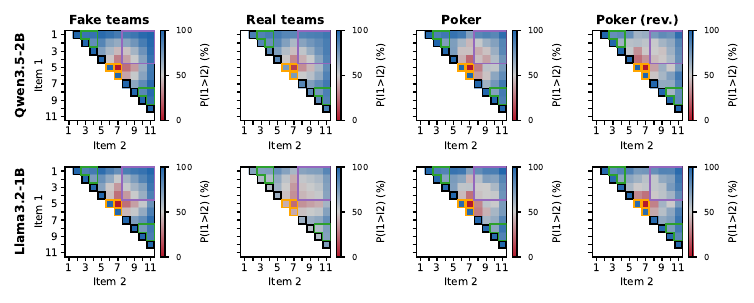}
    \caption{Accuracy on full finetuning for more transitive items.}
    \label{fig:exc-mid}
\end{figure}
\begin{figure}
    \centering
    \includegraphics{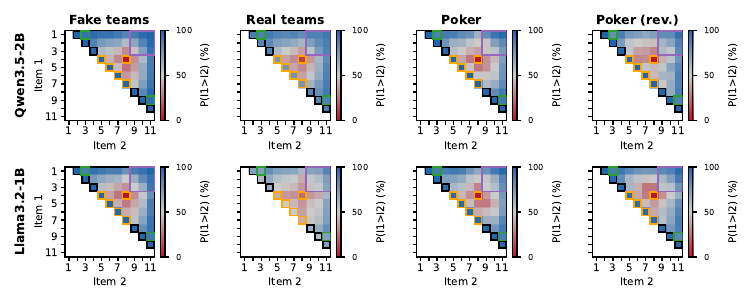}
    \caption{Accuracy on full finetuning for more intransitive items.}
    \label{fig:exc-far}
\end{figure}
\subsection{Content effects in the poker modality}
\label{sec:content-effects}
Humans and PLMs seem to draw logical inferences that are consistent with their prior knowledge more easily, than ones that are inconsistent \cite{evans1983conflict,evans1995belief,dasgupta2022language,lampinen2024language}. We generally found that this was true for the PLMs finetuned on TI as well: models trained on poker tended to perform better than models trained on the reverted version of poker (Fig.~\ref{fig:ti-summary}A). A surprising exception is given by Llama3.2-1B, which showed the opposite effect and performed better in its generalization on the reversed poker hierarchy. For all other finetuning setups, however, we consistently found that models were worse at generalizing over the reversed poker hierarchy than the non-reversed poker hierarchy. Overall, we therefore found that models, in almost all cases of TI, showed a content effect. In contrast, we did not consistently observe a content effect with TI with exceptions. This highlights that introducing more complicated relational or logical rules that potentially contain exceptions as well may differentially affect content effects in PLMs.
\subsection{Open questions}
Overall, we observed that our theoretical model provided a striking account of the finetuning behavior of pretrained language models. Below, we expand on two questions our empirical experiments have raised, that our theory cannot address.

\paragraph{How does prior knowledge impact transitive generalization?}
In practice, relational generalization will interact with prior knowledge. In particular, on TI we observed that models performed better on the poker hierarchy than the reversed poker hierarchy (see Section~\ref{sec:content-effects}). Our theory currently cannot account for such effects of prior knowledge: deviating from the exchangeability assumption might allow us to address this point. This could potentially help us better understand why these content effects appear to be less salient for TI with exceptions.

\paragraph{How can we understand the role of rank for relational generalization through LoRA?}
We found that the rank in low-rank adaptation played a different role from the regularization coefficient in ridge regression (though both are commonly understood to regulate the amount of changes induced by the network). This highlights that a different theoretical paradigm may be necessary to help us more closely understand the behavior of LoRA on relational tasks. However, we note that our theory nevertheless captured the coarse-grained structure of behavior in LoRA.
\section{An Example of an Intransitive Relation: Winningness of Poker Hands}
\label{sec:poker}
To provide an illustration of an intransitive relation, we consider Texas Hold'em poker. Specifically, we consider the relation where for two hands, $x\succ y$ iff $x$ is more likely to win against $y$ in a heads-up match. We computed the equity of all match-ups between hands using \texttt{eval7} and then sampled sets of hands that adhered to the structure of TI with exceptions ($n=9,p=6,q=4$). We then computed the frequency with which $x\succ y$ for any pair of hands (Fig.~\ref{fig:poker}). We found that in most cases, within-section and cross-section generalization would result in the appropriate generalization in this case.
\begin{figure}
    \centering
    \includegraphics{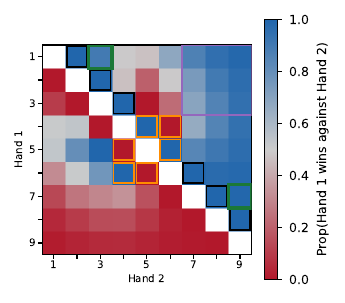}
    \caption{Illustration of TI with exceptions in a real-world example. We sampled sets of hands in poker such that the winningness relation (i.e.\ for two hands, $x\succ y$ iff $x$ is more likely to win against $y$) adhered to the training set in TI with exceptions. Under those constraints, we determined the proportion with which hand 1 (represented along the row) is more likely to win against hand 2. This analysis highlights that both cross-section generalization and within-section generalization would, on average result in the correct generalization.}
    \label{fig:poker}
\end{figure}

\section{Proof of the main theorem}
\label{app:proof}
We first state the full version of the theorem:
\begin{theorem}[Theorem~\ref{thm} (detailed).]
    On TI with exceptions with parameters $n,p,q$ and the training dataset $\mathcal{D}$, the predictions of a kernel model with an exchangeable representation are given by
    \begin{equation}
        f(x_{j,k})=\begin{cases}
            r_j(\alpha,\tilde{c})-r_k(\alpha,\tilde{c})&\text{ for }x\notin\mathcal{D},\\
            m(\alpha,\tilde{c})y_{i,j}+(1-m(\alpha,\tilde{c}))(r_i(\alpha,\tilde{c})-r_j(\alpha,\tilde{c}))&\text{ for } x\in\mathcal{D},
        \end{cases}
    \end{equation}
    where
    \begin{align}
        m(\alpha,\tilde{c})&:=\tfrac{\alpha}{\alpha+\tilde{c}^{-1}},\quad r_j(\alpha,\tilde{c}):=r_j\TI(\alpha,\tilde{c})+r_j^{\mathrm{pert}}(\alpha,\tilde{c}),\\
        r_j\TI(\alpha,\tilde{c})&:=\frac{\sinh((\frac{n+1}{2}-j)\lambda)}{\sinh(\frac{n+1}{2}\lambda)-\sinh(\frac{n-1}{2}\lambda)},\quad\lambda := \operatorname{arccosh}\!\left(\frac{1+\tilde{c}^{-1}}{1-\alpha}\right),\label{eq:rti}\\
        r_j^{\mathrm{pert}}(\alpha,\tilde{c})&:=\frac{\left(1 - {r}_p\TI + {r}_q\TI\right)\left(\tilde{D}_{jp} - \tilde{D}_{jq}\right)}{\sinh(\lambda)\sinh(n\lambda) - 2\sinh\left(\frac{q-p}{2}\lambda\right) G},\label{eq:rpert}\\
        G &:= \cosh\left(\left(q - \tfrac{1}{2}\right)\lambda \right)\sinh\left(\left(n - \tfrac{p+q-1}{2}\right)\lambda \right) + \cosh\left(\left(n - p + \tfrac{1}{2}\right)\lambda \right)\sinh\left(\tfrac{p+q-1}{2}\lambda \right),\\
        \tilde{D}_{ij}&:=\cosh\bigl((\min(i,j)-\tfrac{1}{2})\lambda\bigr)\cosh\bigl((n-\max(i,j)+\tfrac{1}{2})\lambda\bigr).
    \end{align}
    We note that $r\TI_j$ is the ranking system for TI without an exception and hence $r_j^{\text{pert}}$ represents the perturbation on top of it.
    
    The above definition is only valid for $\alpha<1$. For $\alpha=1$,
    \begin{equation}
        r_j(\alpha,\tilde{c})=0.
    \end{equation}
\end{theorem}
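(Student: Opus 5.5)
The plan is to work with the encoding model of Appendix~\ref{sec:encoding} rather than inverting the $2N\times 2N$ training kernel directly. Every exchangeable kernel is realized by the four-hot features $z$, so ridge regression \eqref{eq:opt} becomes a penalized least-squares problem over a ranking system $r$ (compositional readout), a conjunctive table $t$ and a bias $b$. Antisymmetry of $\mathcal{D}$ forces $\tilde r^{(1)}=-\tilde r^{(2)}$, $b=0$ and $t_{k,j}=-t_{j,k}$.

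\textbf{Step 1: reduce to a problem in $r$ alone.} After rescaling by $\kappa_s-\kappa_d$, the objective reads
\[
\sum_{(j,k)\in\mathcal{D}}\bigl(r_j-r_k+t_{j,k}-y_{j,k}\bigr)^2+\tilde c^{-1}C(r,t),
\]
up to the constants of Appendix~\ref{app:intuition}. The variable $t$ appears only on training pairs, so for pairs outside $\mathcal{D}$ we get $t_{j,k}=0$. This gives the first case of the theorem, $f=r_j-r_k$.

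On a training pair, minimizing pointwise in $t_{j,k}$ is a scalar ridge problem. Its solution is $t_{j,k}=m\,(y_{j,k}-(r_j-r_k))$ with $m=\alpha/(\alpha+\tilde c^{-1})$. Substituting back gives the second case, $f=m y+(1-m)(r_j-r_k)$.

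Eliminating $t$ leaves a quadratic in $r$ alone:
\[
(1-m)\sum_{\mathcal{D}}\bigl(r_j-r_k-y_{j,k}\bigr)^2+\tilde c^{-1}\tfrac{2}{1-\alpha}\|r\|^2 .
\]
Its normal equations are $(L+\mu I)r=B^Ty$. Here $L$ is the graph Laplacian of the training graph: the path $1,\dots,n$ plus the extra edge $(q,p)$. The constant $\mu$ is a combination of $\alpha$ and $\tilde c$, and it should match $2\cosh\lambda-2$ with $\cosh\lambda=(1+\tilde c^{-1})/(1-\alpha)$. I would fix the constants by checking the TI case against \eqref{eq:rti}. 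For $\alpha=1$ the penalty on $r$ is infinite, so $r=0$.

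\textbf{Step 2: TI baseline and rank-one perturbation.} Without the exception, $L+\mu I$ is a tridiagonal Toeplitz-like matrix with Neumann ends. Its inverse is $\tilde D_{ij}/(\sinh\lambda\sinh n\lambda)$, which is the standard Green's function of the recursion $r_{j+1}-2\cosh\lambda\, r_j+r_{j-1}=0$ with reflecting boundaries. Its action on the right-hand side $e_1-e_n$ (all path labels equal to $+1$) yields the $\sinh$ formula for $r\TI$, consistent with \cite{lippl2024mathematical}.

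The exception contributes two things. It adds $uu^T$ to $L$ with $u=e_p-e_q$, and it adds $u\cdot 1$ to the right-hand side, since the label $y_{p,q}=1$. By Sherman--Morrison,
\[
r=r\TI+A^{-1}u\,\frac{1-u^Tr\TI}{1+u^TA^{-1}u}.
\]
This equals $r\TI$ plus $(1-r_p\TI+r_q\TI)(\tilde D_{jp}-\tilde D_{jq})$ divided by $\sinh\lambda\sinh n\lambda+\tilde D_{pp}+\tilde D_{qq}-2\tilde D_{pq}$. That is exactly the structure of \eqref{eq:rpert}.

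\textbf{Step 3 (main obstacle): simplify the denominator.} It remains to show that
\[
\tilde D_{pp}+\tilde D_{qq}-2\tilde D_{pq}=-2\sinh\!\left(\tfrac{q-p}{2}\lambda\right)G .
\]
I would expand each $\cosh\cdot\cosh$ product via product-to-sum formulas into terms $\cosh$ of linear combinations of $p,q,n$. I would then group them as differences $\cosh a-\cosh b=2\sinh\frac{a+b}{2}\sinh\frac{a-b}{2}$, pulling out the common factor $\sinh(\frac{p-q}{2}\lambda)$, and recombine the remainder into the two-term expression $G$.

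Sign bookkeeping is the risk here, together with consistency of the $\pm\frac12$ offsets in the min/max. I would verify the identity numerically for a few $(n,p,q,\lambda)$ before writing the algebra. The final check is that the constant $\mu$ from Step 1 indeed equals $2\cosh\lambda-2$ under the stated $\lambda$, which I would confirm via the TI special case.
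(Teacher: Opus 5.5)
Your argument is correct, and it reaches the theorem by a genuinely different route from the paper's.

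\textbf{The paper's route.} The paper works in the dual. It solves $(K+c^{-1}I)a=y$ and uses antisymmetry to reduce to $n-1$ edge coefficients $\bar b$ plus one exception coefficient $\bar h$. It inverts the resulting $(n-1)\times(n-1)$ tridiagonal matrix via Hu--O'Connell and solves a scalar equation for $\bar h$. It then needs a three-case hyperbolic computation to show that the mixed second differences $D_{ij}$ of that inverse equal $-\delta_{ij}+2\tilde D_{ij}\tanh(\lambda/2)/\sinh(n\lambda)$. The training-set formula is read off the dual row equations, and is written out only for adjacent pairs.

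\textbf{Your route.} You work in the primal over node variables. Eliminating the conjunctive weights pointwise gives the $m$-mixture on every training pair, the exception pair included, in one line. What remains is an $n\times n$ Neumann system whose Green's function is $\tilde D_{ij}/(\sinh\lambda\sinh n\lambda)$ directly, so the exception becomes a plain Sherman--Morrison update. In fact the paper's boxed formula for $D_{ij}$ is just the push-through identity $E^\top(EE^\top+\mu I)^{-1}E=I-\mu(L_0+\mu I)^{-1}$ evaluated by hand, with $E$ the edge--node incidence matrix of the path and $L_0=E^\top E$. Your node-space formulation never needs it.

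\textbf{What each buys.} Your route is shorter and extends verbatim to several exceptions (via Woodbury) or to any training graph. The paper's route has the advantage of using only the kernel and published results (the tridiagonal inverse and the earlier TI lemma), with no appeal to a feature realization.

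Two small remarks.

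First, you need not calibrate $\mu$ against the TI formula; derive it directly. Use $s_1^2=(\kappa_s-\kappa_d)(1-\alpha)/2$, $s_2^2=(\kappa_s-\kappa_d)\alpha$, and $\|w\|^2=2\|r\|^2/s_1^2+\|t\|^2/s_2^2$, where the factor $2$ comes from the two compositional copies. Then your reduced objective with weight $\frac{2}{1-\alpha}$ is exact when the loss is summed over the $n$ unordered training edges. It gives $\mu=2(\alpha+\tilde c^{-1})/(1-\alpha)=2\cosh\lambda-2$, with $\tilde c=c(\kappa_s-\kappa_d)$ as in the appendix convention. Summing over the $2n$ ordered pairs with that same weight would halve $\mu$.

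Second, Step 3 takes two lines. Write the denominator as $(\tilde D_{pp}-\tilde D_{pq})+(\tilde D_{qq}-\tilde D_{pq})$. Each bracket is a common $\cosh$ factor times a difference of two $\cosh$'s. Applying $\cosh a-\cosh b=2\sinh\frac{a+b}{2}\sinh\frac{a-b}{2}$ to each turns the sum into $2\sinh(\frac{p-q}{2}\lambda)\,G$.
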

\subsection{Prerequisites}
\begin{theorem}[Hu \& O'Connell \cite{hu1996analytical}]
\label{thm:tri-inv}
Let $a,b\in\mathbb{R}$ such that $\tfrac{a}{b}\leq-2$ and
\begin{equation}
    B\in\mathbb{R}^{m\times m},\quad
    B_{ij}=\begin{cases}
    a&\text{ if }i=j,\\
    b&\text{ if }|i-j|=1,\\
    0&\text{ else}.
    \end{cases}
\end{equation}
Then
\begin{equation}
    B^{-1}_{ij}=-\frac{\cosh\left((m+1-|j-i|)\lambda\right)-\cosh\left((m+1-i-j)\lambda\right)}{2b\sinh\left(\lambda\right)\sinh\left((m+1)\lambda\right)},
\end{equation}
where
\begin{equation}
    \lambda:=\mathrm{arccosh}\left(-\frac{a}{2b}\right).
\end{equation}
\end{theorem}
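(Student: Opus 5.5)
The plan is to verify the formula column by column. We treat each column of $B^{-1}$ as the solution of a second-order linear recurrence with Dirichlet boundary conditions, that is, as a discrete Green's function. We then rewrite that solution with a product-to-sum identity to reach the stated form.

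First, the hypothesis $a/b\le -2$ implicitly requires $b\neq 0$, and it gives $-a/(2b)\ge 1$. So $\lambda=\operatorname{arccosh}(-a/(2b))\ge 0$ is a well-defined real number and $a=-2b\cosh\lambda$. Assume first that $\lambda>0$. Fix a column index $j$ and write $x_i:=B^{-1}_{ij}$. The equation $BB^{-1}=I$ in column $j$ reads
\begin{equation}
x_{i-1}-2\cosh(\lambda)\,x_i+x_{i+1}=\frac{\delta_{ij}}{b},\qquad i=1,\dotsc,m,
\end{equation}
with the convention $x_0=x_{m+1}=0$. The homogeneous recurrence is solved by $\sinh(i\lambda)$, which vanishes at $i=0$, and by $\sinh((m+1-i)\lambda)$, which vanishes at $i=m+1$. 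This is because $\sinh((i-1)\lambda)+\sinh((i+1)\lambda)=2\cosh\lambda\,\sinh(i\lambda)$. I therefore make the ansatz
\begin{equation}
x_i=C\,\sinh(\min(i,j)\lambda)\,\sinh((m+1-\max(i,j))\lambda).
\end{equation}
This satisfies both boundary conditions. It also satisfies the homogeneous equation at every $i\neq j$, since on each side of $j$ it is a multiple of a single homogeneous solution.

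The only nontrivial step is fixing $C$ from the equation at $i=j$. Substituting gives the expression
\begin{equation}
C\bigl[\sinh((j-1)\lambda)\sinh((m+1-j)\lambda)-2\cosh\lambda\,\sinh(j\lambda)\sinh((m+1-j)\lambda)+\sinh(j\lambda)\sinh((m-j)\lambda)\bigr],
\end{equation}
which must equal $1/b$. Using $\sinh((j-1)\lambda)=2\cosh\lambda\sinh(j\lambda)-\sinh((j+1)\lambda)$, the bracket collapses to $\sinh(j\lambda)\sinh((m-j)\lambda)-\sinh((j+1)\lambda)\sinh((m+1-j)\lambda)$. This is a discrete Wronskian. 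By the identity $\sinh A\sinh B-\sinh(A+\lambda)\sinh(B+\lambda)=-\sinh\lambda\,\sinh(A+B+\lambda)$, which one checks with product-to-sum formulas, it equals $-\sinh\lambda\sinh((m+1)\lambda)$. This quantity is independent of $j$, as expected, and it is nonzero for $\lambda>0$. Hence
\begin{equation}
C=-\frac{1}{b\sinh\lambda\,\sinh((m+1)\lambda)}.
\end{equation}
I expect this identity check to be the main (if modest) obstacle, and I would verify it by expanding both products into $\cosh$ terms.

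Finally, apply $2\sinh x\sinh y=\cosh(x+y)-\cosh(x-y)$ with $x=\min(i,j)\lambda$ and $y=(m+1-\max(i,j))\lambda$. Then $x+y=(m+1-|i-j|)\lambda$ and $x-y=(i+j-m-1)\lambda$, and since $\cosh$ is even this gives exactly the claimed numerator and the factor $2b$ in the denominator. Uniqueness follows because $B$ is invertible, its inverse having just been exhibited.

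For the boundary case $\lambda=0$, i.e.\ $a=-2b$, both sides are continuous in $\lambda$. The left side is continuous because $B$ depends continuously on $a$ and stays invertible; the right side is continuous once its removable singularity is resolved by L'Hôpital, giving $B^{-1}_{ij}=-\min(i,j)(m+1-\max(i,j))/(b(m+1))$. So the formula extends by taking the limit $\lambda\to 0^+$.
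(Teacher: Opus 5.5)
Your proof is correct. There is nothing to compare it against line by line, because the paper gives no argument for this statement: it imports the formula from Hu \& O'Connell and uses it as a black box.

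Your discrete Green's function construction checks out. You take the homogeneous solutions $\sinh(i\lambda)$ and $\sinh((m+1-i)\lambda)$, pin them to the two Dirichlet boundaries, glue them at $i=j$, and fix the constant by the $j$-independent Wronskian $-\sinh\lambda\,\sinh((m+1)\lambda)$. The identity $\sinh A\sinh B-\sinh(A+\lambda)\sinh(B+\lambda)=-\sinh\lambda\,\sinh(A+B+\lambda)$ holds, the final product-to-sum step is right, and a right inverse of a square matrix is indeed its inverse.

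Beyond being self-contained, your route buys two things. First, it shows exactly where $a/b\le-2$ enters: it gives a real $\lambda$ and a nonvanishing Wronskian. Second, it yields as an intermediate the product form $B^{-1}_{ij}=-\sinh(\min(i,j)\lambda)\sinh((m+1-\max(i,j))\lambda)/(b\sinh\lambda\,\sinh((m+1)\lambda))$. This is precisely what the paper later reconstructs for $\alpha' B^{-1}_{ij}$ (with $m+1=n$, $b=-\alpha'$) by undoing the $\cosh$ difference in the section characterizing $D_{ij}$. With your derivation, that detour becomes unnecessary.

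There is one small gap in your $\lambda=0$ paragraph: you assert that $B$ ``stays invertible'' at $a=-2b$ without justification. It is true, and the cleanest fix is to note that your explicit $X(\lambda)$ has a finite limit as $\lambda\to0^+$. Passing to the limit in $B(\lambda)X(\lambda)=I$ then gives $B(0)X(0)=I$, which proves invertibility and the limiting formula at once. Alternatively, rerun your construction with the homogeneous solutions $i$ and $m+1-i$, for which the analogous bracket equals $-(m+1)$. Since the stated formula is literally $0/0$ at $\lambda=0$, this limiting reading is the only sensible one there anyway.
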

\subsection{Setup}
\label{app:ti_setup}
We consider $n\in\mathbb{N}$ items $(x_j)_{i=1,\dotsc,n}$, $x^{(i)}\in\mathbb{R}^d$. The task presents two of those items together and we denote this by
\begin{equation}
    x_{(j,k)}=\begin{pmatrix}x_j\\x_k\end{pmatrix}\in\mathbb{R}^{2d}.
\end{equation}

The corresponding label is given by
\begin{equation}
    y_{j,k}=\begin{cases}+1 & \text{if } (j,k)=(p,q),\\
                          -1 & \text{if } (j,k)=(q,p),\\
                          +1 & \text{else, if } j<k,\\
                          -1 & \text{else, if } j>k,\end{cases}
\end{equation}
where $(p,q)$ with $1\leq q<p\leq n$ and $p-q\geq 2$ denotes the designated \emph{exception pair}. Since $q<p$, the transitive rule would assign $y_{p,q}=-1$ and $y_{q,p}=+1$; the exception flips both, making this pair inconsistent with the underlying ranking.

The training dataset consists of all adjacent items together with the exception pair, i.e.
\begin{equation}
    \mathcal{D}=\underbrace{\left\{\bigl(x_{j,j+1},1\bigr),\bigl(x_{j+1,j},-1\bigr)\,\middle|\,j=1,\dotsc,n-1\right\}}_{\text{adjacent pairs}}\cup\underbrace{\left\{\bigl(x_{p,q},+1\bigr),\bigl(x_{q,p},-1\bigr)\right\}}_{\text{exception pair}}.
\end{equation}

In matrix form, we denote this dataset as
\begin{equation}
    X=\begin{pmatrix}x_{1,2}\\\vdots\\x_{n-1,n}\\\hline x_{2,1}\\\vdots\\x_{n,n-1}\\\hline x_{p,q}\\x_{q,p}\end{pmatrix}\in\mathbb{R}^{(2n)\times 2d},\quad
    y=\begin{pmatrix}+1\\\vdots\\+1\\\hline -1\\\vdots\\-1\\\hline +1\\-1\end{pmatrix}\in\mathbb{R}^{2n}.
\end{equation}

We consider a set of features
\begin{equation}
    \phi:\mathbb{R}^{2d}\to\mathbb{R}^h,\quad h\in\mathbb{N},
\end{equation}
and a linear model of those latent features that predicts the output, i.e.
\begin{equation}
    \hat{y}_{i,j}=\langle w,\phi(x_{i,j})\rangle.
\end{equation}
\subsection{General intuition}
\label{app:ti_intuition}
Many standard statistical learning models (in particular the ones we consider below) only depend on their input representation through its trial-by-trial similarity (or ``induced kernel'') 
\begin{equation}
    K(x,x')=\langle\phi(x),\phi(x')\rangle.
\end{equation}
In particular, the ridge regression solution (\ref{eq:opt}) can be described in terms of the ``dual coefficients''
\begin{equation}
    a=(K+\tfrac1cI)^{-1}y,
    \label{eq:dual}
\end{equation}
where $K$ is the trial-by-trial similarity of the training dataset:
\begin{equation}
    K=\phi(X)^T\phi(X)\in\mathbb{R}^{2n\times 2n}.
\end{equation}
Each dual coefficient corresponds to a particular training data point $(j,k)$ and we denote it by $a_{(j,k)}$. The dataset now contains $2n$ training points: the $2(n-1)$ adjacent pairs plus the two exception pairs $(p,q)$ and $(q,p)$. The model behavior on a new data point $x_{j,k}$ is then given by
\begin{equation}
    f(x_{j,k})=\langle k_{(j,k)}^{(\text{test})},a\rangle,\quad k_{(j,k)}^{(\text{test})}:=\phi(X)^T\phi(x_{j,k}).
\end{equation}
Importantly, the similarity between test data points and training data points can only take on two possible values: $\kappa_o$ and $\kappa_d$. The training points whose similarity to $(j,k)$ equals $\kappa_o$ are exactly those that share an item with $(j,k)$. Among the adjacent pairs, these are $a_{(j,j+1)}$, $a_{(j,j-1)}$, $a_{(k-1,k)}$, and $a_{(k+1,k)}$. The exception pairs $a_{(p,q)}$ and $a_{(q,p)}$ additionally contribute with similarity $\kappa_o$ whenever $j$ or $k$ coincides with $p$ or $q$. (If $j,k\in\{1,n\}$, some adjacent-pair coefficients will not exist; we set them to zero. For example, if $i=0$, $a_{(i,i-1)}=a_{(1,0)}$ corresponds to a non-existent data point and is set to zero.) All other training points have similarity $\kappa_d$. Taken together, we can express model behavior as
\begin{equation}
\begin{split}
    f(x_{j,k})=\kappa_d\sum a&+(\kappa_o-\kappa_d)\bigl(a_{(j,j+1)}+a_{(j,j-1)}+a_{(k-1,k)}+a_{(k+1,k)}\bigr)\\
    &+(\kappa_o-\kappa_d)\bigl((\delta_{j=p}+\delta_{k=q})a_{(p,q)}+\delta_{j=q}+\delta_{k=p})a_{(q,p)}\bigr).
\end{split}
\end{equation}
The first line is identical to the no-exception case: a sum of values dependent on $i$ and values dependent on $j$, which already implements a ranking system. The second line is the exception correction: it activates only when the test pair shares an item with the exception pair, and it too decomposes additively into an $j$-dependent and $k$-dependent contribution. We can therefore define a generalized ranking system
\begin{equation}
    r_1(j)=(\kappa_o-\kappa_d)\bigl(a_{(j,j+1)}+a_{(j,j-1)}\bigr)+(\kappa_o-\kappa_d)\,\delta_{j=p}a_{(p,q)}+\delta_{j=q}a_{(q,p)},
\end{equation}
\begin{equation}
    r_2(k)=-(\kappa_o-\kappa_d)\bigl(a_{(k-1,k)}+a_{(k+1,k)}\bigr)-(\kappa_o-\kappa_d)\,\delta_{k=q}a_{(p,q)}+\delta_{k=p}a_{(q,p)},
\end{equation} 
yielding
\begin{equation}
    f(x_{j,k})=r_1(j)-r_2(k).
\end{equation}

To see whether the model generalizes, we need to determine whether the ranks are ordered correctly. To do so, we analytically solve the inverse problem Eq.~\eqref{eq:dual}. This is described in detail in the next section. As a brief summary, we rely on previous results on inverses for banded diagonal matrices \cite{hu1996analytical} to compute the inverse matrix. This inverse is expressed in terms of hyperbolic functions. We then use hyperbolic identities in a few different ways to compute $a$ and, finally, $r_1$ and $r_2$ from this. This reveals that $r_1(i)=r_2(i)$ and further gives rise to the analytical expression provided in the main text.

\subsection{Expression of the dual solution.}
Here we assume that the weights are learned through ridge regression. Specifically, our loss function is given by the mean squared error over the training set:
\begin{align}
L(w)
&= \sum_{(x,y)\in \mathcal{D}} \ell\!\bigl(y, \langle w, \phi(x)\rangle\bigr) \nonumber \\
&= \sum_{j=1}^{n-1} \ell\!\left(1, \bigl\langle w, \phi(x_{j,j+1})\bigr\rangle\right)
 + \sum_{j=1}^{n-1} \ell\!\left(-1, \bigl\langle w, \phi(x_{j+1,j})\bigr\rangle\right) \nonumber \\
&\quad + \ell\!\left(1, \bigl\langle w, \phi(x_{p,q})\bigr\rangle\right)
      + \ell\!\left(-1, \bigl\langle w, \phi(x_{q,p})\bigr\rangle\right).
\label{S47}
\end{align}
where $\ell(y,\hat y) = (y - \hat y)^2$. The weights $w$ are then determined by minimizing the optimization problem
\begin{equation}
L(w) + \tfrac{1}{c}\|w\|_2^2,
\label{S48}
\end{equation}
where $c$ is the inverse strength of the regularization. The corresponding dual problem is given by
\begin{equation}
a := \bigl(K + \tfrac{1}{c} I \bigr)^{-1} y,
\label{S49}
\end{equation}
where $K=\phi(X)\phi(X)^T$ is the cross-sample similarity.
Writing $a$ as
\begin{equation}
a = \begin{pmatrix} b^{(1)} \\ b^{(2)} \\ h^{(1)} \\ h^{(2)} \end{pmatrix}, 
\quad b^{(1)} \in \mathbb{R}^{n-1}, \; b^{(2)} \in \mathbb{R}^{n-1}, h^{(1)} \in \mathbb{R}, h^{(2)} \in \mathbb{R}
\label{S51}
\end{equation}
we can write $(K + \tfrac{1}{c}I)^{-1} a = y$ as
\begin{align}
(\delta_s + \tfrac{1}{c}) b^{(1)}_i 
+ \delta_o\bigl(b^{(2)}_{i-1} + b^{(2)}_{i+1}\bigr)
+ \delta_o\bigl(h^{(1)}(\delta_{(i=p)} + \delta_{(i+1 = q)}) \bigr)
+ \delta_o\bigl(h^{(2)}(\delta_{(i=q)} + \delta_{(i+1 = p)}) \bigr) 
+ \kappa_d \langle a \rangle &= 1,
\nonumber\\
(\delta_s + \tfrac{1}{c}) b^{(2)}_i 
+ \delta_o\bigl(b^{(1)}_{i-1} + b^{(1)}_{i+1}\bigr)
+ \delta_o\bigl(h^{(2)}(\delta_{(i=p)} + \delta_{(i+1 = q)}) \bigr) 
+ \delta_o\bigl(h^{(1)}(\delta_{(i=q)} + \delta_{(i+1 = p)}) \bigr) 
+ \kappa_d \langle a \rangle &= -1, 
\nonumber\\
(\delta_s + \tfrac{1}{c}) h^{(1)} 
+ \delta_o\bigl(b^{(1)}_{q-1} + b^{(1)}_p\bigr) 
+ \delta_o\bigl(b^{(2)}_{q} + b^{(2)}_{p-1}\bigr) 
+ \kappa_d \langle a \rangle &= 1, 
\nonumber\\
(\delta_s + \tfrac{1}{c}) h^{(2)} 
+ \delta_o\bigl(b^{(2)}_{q-1} + b^{(2)}_p\bigr) 
+ \delta_o\bigl(b^{(1)}_{q} + b^{(1)}_{p-1}\bigr) 
+ \kappa_d \langle a \rangle &= -1, 
\nonumber\\
\label{S52}
\end{align}
where we define $\delta_s = \kappa_s - \kappa_d$, $\delta_o = \kappa_o - \kappa_d$, 
$\langle a \rangle := \sum_{i=1}^{2n} a_i$, and for notational simplicity, $b^{(j)}_0 = b^{(j)}_n = 0$. This means that if $b^{(1)},b^{(2)},h^{(1)},h^{(2)}$ are a solution to the equation, then so are
\begin{equation}
\tilde b^{(1)} := -b^{(2)}, 
\quad \tilde b^{(2)} := -b^{(1)},
\qquad \tilde h^{(2)} := -h^{(1)},
\quad \tilde h^{(2)} := -h^{(1)}.
\label{S53}
\end{equation}
Since (\ref{S49}) is well-defined, the solution $b$ must be unique and we can infer
\begin{equation}
b^{(1)} = -b^{(2)}, \qquad h^{(1)} = -h^{(2)}.
\label{S54}
\end{equation}
In particular, this means that
\begin{equation}
    \langle a\rangle=0.
\end{equation}

We thus define
\begin{equation}
\bar b := b^{(1)} = -b^{(2)}, 
\qquad \bar{h} := h^{(1)} = -h^{(2)},
\label{S55}
\end{equation}

Setting
\begin{equation}
\tilde c := c \delta_s,
\label{S56}
\end{equation}
(\ref{S52}) yields

\begin{align}
\delta_s(1 + \tilde{c}^{-1}) \bar{b}_i 
- \delta_o\bigl(\bar{b}_{i-1} + \bar{b}_{i+1}\bigr)
+ \delta_o\bigl(\bar{h}(\delta_{(i=p)} + \delta_{(i+1 = q)}) \bigr)
- \delta_o\bigl(\bar{h}(\delta_{(i=q)} + \delta_{(i+1 = p)}) \bigr) &= 1
\nonumber \\
\delta_s(1 + \tilde{c}^{-1}) \bar{h} 
+ \delta_o\bigl(\bar{b}_{q-1} + \bar{b}_p\bigr) 
- \delta_o\bigl(\bar{b}_{q} + \bar{b}_{p-1}\bigr) 
&= 1, 
\label{S57}
\end{align}
as $\langle a \rangle = \langle b^{(1)} \rangle + \langle b^{(2)} \rangle + h^{(1)} + h^{(2)} = 0$ and therefore the last summand vanishes. 

As $\tfrac{\delta_o}{\delta_s} = \tfrac{1-\alpha}{2}$, setting $\bar{b}$ in matrix form results in the following.  Note we denote $e_{j}$ as the $n$-dimensional one hot vector with a non-zero entry in the $j$-th dimension. We define
\begin{equation}
\tilde{K} \in \mathbb{R}^{(n-1)\times(n-1)},\qquad
\tilde{K}_{ij} =
\begin{cases}
1, & \text{if } i=j,\\[4pt]
-\dfrac{1-\alpha}{2}, & \text{if } |i-j|=1,\\[6pt]
0, & \text{else},
\end{cases}\quad B:=\tilde{K}+\tilde{c}^{-1}I.
\label{S60}
\end{equation}
Notably $(\delta_sB)^{-1}\mathbf{1}$ is the solution to the dual problem for TI without an exception. We therefore define $\bar{b}\TI:=(\delta_sB)^{-1}\mathbf{1}$. We then observe
\begin{align}& 
\delta_s(\tilde{K} + \tilde{c}^{-1} I)\bar{b} + \delta_o \bar{h} (e_p + e_{q-1} - e_{q} - e_{p-1} ) ] = 1 \nonumber \\
& \implies \delta_s 
B\bar{b} = 1 - \delta_o \bar{h} (e_p + e_{q-1} - e_{q} - e_{p-1}) )
\nonumber \\
&\implies
\bar{b} = (\delta_s B)^{-1} (1 - \delta_o \bar{h} (e_p + e_{q-1} - e_{q} - e_{p-1}) ) \nonumber \\ 
&\implies 
\bar{b} = \bar{b}\TI - \alpha' \bar{h}B^{-1} (e_p + e_{q-1} - e_{q} - e_{p-1}) ) \nonumber \\
& \implies \bar{b} = \bar{b}\TI - \alpha' \bar{h} (B^{-1}_{ip} + B^{-1}_{i(q-1)} - B^{-1}_{iq} - B^{-1}_{i(p-1)})
\label{S59}
\end{align}

The matrix $B$ is tridiagonal with diagonal entries \(1+\tilde{c}\) and
off–diagonal entries \(-\frac{1-\alpha}{2}\). 
The ratio between the diagonal and off–diagonal terms is therefore
\begin{equation}
-\,2\,\frac{1+\tilde{c}^{-1}}{1-\alpha}\;\le\;-2.
\label{S70}
\end{equation}

As this ratio is smaller than \(-2\), we can apply Theorem~\ref{thm:tri-inv} to infer
\begin{equation}
B^{-1}_{ij}
=
\frac{
\cosh\!\bigl((n-|j-i|)\,\lambda\bigr)
-
\cosh\!\bigl((n-i-j)\,\lambda\bigr)
}{
(1-\alpha)\,\sinh(\lambda)\,\sinh(n\lambda)
},
\qquad
\lambda := \operatorname{arccosh}\!\left(\frac{1+\tilde{c}^{-1}}{1-\alpha}\right).
\label{S71}
\end{equation}

For notational convenience we define $\alpha':=\delta_o/\delta_s=\tfrac{1-\alpha}{2}$. Thus the dual coefficients are:
\begin{align}
\bar{b} &= \bar{b}_i\TI - \alpha' \bar{h}(B\inv_{ip} + B\inv_{i(q-1)} - B\inv_{iq} - B\inv_{i(p-1)})) \nonumber \\
\bar{h} &= \tfrac{1}{ \delta_s (1 + \tilde{c}^{-1}) } \bigl( 1 - \delta_o (\bar{b}_p + \bar{b}_{q-1} - \bar{b}_q - \bar{b}_{p-1}) \bigr).
\label{S59}
\end{align}

The predictions of the fitted model can be expressed in terms of the dual coefficients by
\begin{align}
\begin{split}
f\!\left(x^{(j,k)}\right) 
&= \sum_{(x,y)\in\mathcal{D}} a_x K(x, x^{(j,k)})\\
&= \sum_{i=1}^{n-1} K(i,i+1; j,k)\, b_i^{(1)} + \sum_{i=1}^{n-1} K(i+1,i; j,k)\, b_i^{(2)} + K(p,q;j,k)h^{(1)} + K(q,p;j,k){h^{(2)}}.
\label{S61}
\end{split}
\end{align}

We can express this in terms of $\bar{b}$ and $\bar{h}$ as
\begin{equation}
f\!\left(x_{j,k}\right) 
= \sum_{i=1}^{n-1} K(i,i+1; j,k)\, \bar{b}_i - \sum_{i=1}^{n-1} K(i+1,i; j,k) \, \bar{b}_i + K(p,q;j,k)\bar{h} - K(q,p;j,k)\bar{h}.
\label{S62}
\end{equation}
There are now two different cases: either $x_{j,k}$ is in the training data or not.
\paragraph{Case 1: Test data.} 
If $x_{j,k}$ is not in the training data, all data points are either overlapping or distinct. 
In the first data point, a data point is overlapping if $i=j$ or $i=k-1$ and therefore
\begin{equation}
\sum_{i=1}^{n-1} K(i,i+1; j,k)\, \bar{b}_i + K(p,q;j,k)\bar{h}
= \kappa_d \langle \bar{b} \rangle + \delta_o(\bar{b}_j + \bar{b}_{k-1}) + \delta_o \bar{h}(\delta_{(j = p)} + \delta_{(k = q)})
\label{S63}
\end{equation}

(Note that one of these dual coefficients, as in the previous section, may not exist, if $i \in \{1,n\}$. 
To leave the expression simple, we define $\overline{b}_0 = \overline{b}_n = 0$, covering these cases as well.)
Analogously, the second sum amounts to
\begin{equation}
\sum_{i=1}^{n-1} K(i+1,i; j,k) \bar{b}_i + K(q,p;j,k)\bar{h}
= \kappa_d \langle \bar{b} \rangle + \delta_o(\bar{b}_{(j-1)} + \bar{b}_k) + \delta_o \bar{h}(\delta_{(j = q)} + \delta_{(k = p)})
\label{S64}
\end{equation}
and overall, the model behavior is expressed as
\begin{equation}
f(x_{j,k}) 
=  \delta_o \bigl(\bar{b}_j + \bar{b}_{k-1} - \bar{b}_{(j-1)} - \bar{b}_k\bigr) + \delta_o \bar{h} \bigl(\delta_{(j=p)} + \delta_{(k=q)} - \delta_{(j=q)} - \delta_{(k=p)}\bigr).
\label{S65}
\end{equation}

\bigskip

\textbf{Case 2: Training data.} \quad 
If $x_{j,k}$ is in the training data, then there are four cases.  Either $k=j+1$, $k=j-1$, or $(j,k)\in\{(p,q),(q,p)\}$.
In the first case, the first sum has as one of its data points $(j,k)=(j,j+1)$ itself. The only overlapping data points are potentially with the exception training pairs. 
\begin{equation}
\sum_{i=1}^{n-1} K(i,i+1; j,k)\, \bar{b}_i + K(p,q;j,j+1)\bar{h} 
=  \kappa_d \langle a \rangle + \delta_s(\bar{b}_j) + \delta_o \bar{h}(\delta_{(j=p)} + \delta_{(k=q)}) 
\label{S66}
\end{equation}
Analogously the second sum is:
\begin{equation}
\sum_{i=1}^{n-1} K(i+1,i; j,k) \bar{b} + K(q,p;j,j,k=j+1) \bar{h} 
= \kappa_d \langle \bar{a} \rangle + 
\delta_o(\bar{b}_{j-1} + \bar{b}_{k}) + 
\delta_o \bar{h}(\delta_{(j=q)} + \delta_{(k=p)}) 
\end{equation}
and overall, the model behavior is expressed as
\begin{equation}
f(x_{j,k}) 
= \delta_s\bar{b}_j - \delta_o(\bar{b}_{j-1} + \bar{b}_{k}) + \delta_o \bar{h}(\delta_{(j=p)} + \delta_{(k=q)} - \delta_{(j=q)} - \delta_{(k=p)}) 
\end{equation}
The second case is analogous: 
\begin{equation}
f(x_{j,k}) 
= - \delta_s\bar{b}_k + \delta_o(\bar{b}_{j} + \bar{b}_{k-1}) + \delta_o \bar{h}(\delta_{(j=p)} + \delta_{(k=q)} - \delta_{(j=q)} - \delta_{(k=p)}) 
\end{equation}
The third case is analogous to the first case.  Note that the similar data point is the exception training pair.
\begin{equation}
f(x_{j,k}) 
= \delta_s\bar{h} - \delta_o(\bar{b}_{p} + \bar{b}_{q-1} - \bar{b}_{p-1} + \bar{b}_{q})
\end{equation}
The last case is analogous to the fourth case.  
\begin{equation}
f(x_{j,k}) 
= - \delta_s\bar{h} + \delta_o(\bar{b}_{p} + \bar{b}_{q-1} - \bar{b}_{p-1} + \bar{b}_{q})
\end{equation}

Taken together, the prediction made by $\bar{a}$ are expressed as
\[
f(x_{j,k}) =
\begin{cases}
\delta_s\bar{b}_j - \delta_o(\bar{b}_{j-1} + \bar{b}_{k}) + \delta_o \bar{h}(\delta_{(j=p)} + \delta_{(k=q)} - \delta_{(j=q)} - \delta_{(k=p)}), & \text{if } k = j+1, \\[6pt]
- \delta_s\bar{b}_k + \delta_o(\bar{b}_{j} + \bar{b}_{k-1}) + \delta_o \bar{h}(\delta_{(j=p)} + \delta_{(k=q)} - \delta_{(j=q)} - \delta_{(k=p)}), & \text{if } k = j-1, \\[6pt]
\delta_s\bar{h} - \delta_o(\bar{b}_{p} + \bar{b}_{q-1} - \bar{b}_{p-1} - \bar{b}_{q}), & \text{if } j=p \text{ and } k=q, \\
- \delta_s\bar{h} + \delta_o(\bar{b}_{p} + \bar{b}_{q-1} - \bar{b}_{p-1} - \bar{b}_{q}), & \text{if } j=q \text{ and } k=p, \\
\delta_o \bigl(\bar{b}_j + \bar{b}_{k-1} - \bar{b}_{(j-1)} - \bar{b}_k\bigr) + \delta_o \bar{h} \bigl(\delta_{(j=p)} + \delta_{(k=q)} - \delta_{(j=q)} - \delta_{(k=p)}\bigr), & \text{else} 
\end{cases}
\]

This makes apparent the emergent rank representation for test cases:
\begin{equation}
r_j = \delta_o \bigl(\bar{b}_j - \bar{b}_{(j-1)} \bigr) + \delta_o \bar{h} \bigl(\delta_{(j=p)} - \delta_{(j=q)} \bigr),
\label{S67}
\end{equation}
i.e.\
\begin{equation}
    f(x_j,x_k)=r_j-r_k.
\end{equation}
\subsection{Reformulating $\bar{h}$}
Using the rank, can reformulate $\bar{h}$ as the following:
\begin{equation}
    \bar{h} = \frac{ 1 - {r}_p\TI + {r}_q\TI }
    { \delta_s(1+\tilde{c}^{-1}) - \delta_o(D_{pq} + D_{qp} - D_{pp} - D_{qq}) },
    \label{S68}
\end{equation}
where
\begin{align}
    D_{ij}&:= \delta_o \Big( B_{(i-1)j}^{-1} + B_{i(j-1)}^{-1} - B_{ij}^{-1} - B_{(i-1)(j-1)}^{-1} \Big),\\
    r\TI_j&:= \delta_o(\bar{b}_j\TI-\bar{b}_{j-1}\TI).
\end{align}

We note that we can use Lemma S1.4 \cite{lippl2024mathematical} to express
\begin{equation}
    r\TI_j=\frac{\sinh((\frac{n+1}{2}-j)\lambda)}{\sinh(\frac{n+1}{2}\lambda)-\sinh(\frac{n-1}{2}\lambda)}.
\end{equation}
We show this below.

We remind ourselves of the following.  

\begin{align}
\bar{b} &= {\bar{b}}_i\TI - \alpha' \bar{h}\bigl(B^{-1}_{ip} + B^{-1}_{i(q-1)} - B^{-1}_{iq} - B^{-1}_{i(p-1)}\bigr), \\
\bar{h} &= \tfrac{1}{ \delta_s (1 + \tilde{c}^{-1}) } \bigl( 1 - \delta_o (\bar{b}_p + \bar{b}_{q-1} - \bar{b}_q - \bar{b}_{p-1}) \bigr), \label{eq:h-expression}\\
r_j &= \delta_o \bigl(\bar{b}_j - \bar{b}_{(j-1)} \bigr) + \delta_o \bar{h} \bigl(\delta_{(j=p)} - \delta_{(j=q)} \bigr).
\label{A2}
\end{align}
Currently, $\bar{b}$ and $\bar{h}$ are interdependent---we will now solve for $\bar{h}$.

We define
\begin{equation*}
D_{ij} := \alpha'\bigl(B^{-1}_{(i-1)j} + B^{-1}_{i(j-1)} - B^{-1}_{ij} - B^{-1}_{(i-1)(j-1)}\bigr).
\label{D_def}
\end{equation*}

Then the first term of the rank can be written as
\begin{align}
\delta_o\bigl(\bar{b}_j - \bar{b}_{j-1}\bigr)
&= \underbrace{ \delta_o ({\bar{b}}_j\TI - {\bar{b}}_{j-1}\TI)}_{=\,r\TI_j} -  \alpha' \delta_o \bar{h}\Bigl[\bigl(B^{-1}_{jp} - B^{-1}_{(j-1)p}\bigr) + \bigl(B^{-1}_{j(q-1)} - B^{-1}_{(j-1)(q-1)}\bigr) \notag \\
&\hspace{6em} - \bigl(B^{-1}_{jq} - B^{-1}_{(j-1)q}\bigr) - \bigl(B^{-1}_{j(p-1)} - B^{-1}_{(j-1)(p-1)}\bigr)\Bigr] \\
&= r\TI_j + \delta_o\bar{h}\bigl(D_{jp} - D_{jq}\bigr).
\end{align}

Thus, the rank can be rewritten as
\begin{equation}
r_j = {r}_j^{\text{TI}} + \delta_o\bar{h}\bigl(D_{jp} - D_{jq}\bigr) + \delta_o\bar{h}\bigl(\delta_{j=p} - \delta_{j=q}\bigr).
\label{eq:rank-D}
\end{equation}

We now rewrite $\bar{h}$. Starting from (\ref{eq:h-expression})
and defining $Z := \delta_s(1+\tilde{c}^{-1})$,
\begin{equation}
Z\bar{h} = 1 - \delta_o\Bigl[(\bar{b}_p - \bar{b}_{p-1}) - (\bar{b}_q - \bar{b}_{q-1})\Bigr].
\end{equation}
Substituting $\delta_o(\bar{b}_j - \bar{b}_{j-1}) = {r}_j\TI + \delta_o\bar{h}(D_{jp} - D_{jq})$ at $j = p$ and $j = q$,

\begin{equation}
Z\bar{h} = 1 - \Bigl[\bigl(r\TI_p + \delta_o\bar{h}(D_{pp} - D_{pq})\bigr) - \bigl(r\TI_q + \delta_o\bar{h}(D_{qp} - D_{qq})\bigr)\Bigr].
\end{equation}
\begin{equation}
Z\bar{h} = 1 - r\TI_p + r\TI_q - \delta_o\bar{h}(D_{pp} - D_{pq}) + \delta_o\bar{h}(D_{qp} - D_{qq}).
\end{equation}
\begin{equation}
Z\bar{h} + \delta_o\bar{h}(D_{pp} - D_{pq} - D_{qp} + D_{qq}) = 1 - {r}_p^{\text{TI}} + {r}_q^{\text{TI}}.
\end{equation}
\begin{equation}
Z\bar{h} - \delta_o\bar{h}(-D_{pp} + D_{pq} + D_{qp} - D_{qq}) = 1 - {r}_p^{\text{TI}} + {r}_q^{\text{TI}}.
\end{equation}
Dividing by the factor on the left establishes (\ref{S68}).
\subsection{Characterizing $D_{ij}$}
We compute $D_{ij}$ in closed form. Recall
\begin{equation}
D_{ij} := \alpha'\bigl(B^{-1}_{(i-1)j} + B^{-1}_{i(j-1)} - B^{-1}_{ij} - B^{-1}_{(i-1)(j-1)}\bigr),
\end{equation}
and
\begin{equation}
\alpha' B^{-1}_{ij} = \alpha'\frac{\cosh\bigl((n-|j-i|)\lambda\bigr) - \cosh\bigl((n-i-j)\lambda\bigr)}{(1-\alpha)\sinh(\lambda)\sinh(n\lambda)} = \frac{\sinh\bigl((n-\max(i,j))\lambda\bigr)\sinh\bigl(\min(i,j)\,\lambda\bigr)}{\sinh(\lambda)\sinh(n\lambda)}.
\end{equation}
Applying the identity $\cosh A - \cosh B = 2\sinh\bigl(\tfrac{A+B}{2}\bigr)\sinh\bigl(\tfrac{A-B}{2}\bigr)$ with $A = (n-|j-i|)\lambda$ and $B = (n-i-j)\lambda$, and using $|j-i| + i + j = 2\max(i,j),i + j - |j-i| = 2\min(i,j)$, we find
\begin{equation}
\cosh\bigl((n-|j-i|)\lambda\bigr) - \cosh\bigl((n-i-j)\lambda\bigr) = 2\sinh\bigl((n-\max(i,j))\lambda\bigr)\sinh\bigl(\min(i,j)\,\lambda\bigr).
\end{equation}
Substituting and using $\alpha' = (1-\alpha)/2$, so that $2\alpha'/(1-\alpha) = 1$, gives the second equality. When $i \leq j$, this further simplifies to
\begin{equation}
\alpha' B^{-1}_{ij} = \frac{\sinh\bigl((n-j)\lambda\bigr)\sinh(i\lambda)}{\sinh(\lambda)\sinh(n\lambda)}.
\end{equation}

We split into three cases and use a hyperbolic trigonometric identities to simplify $D_ij$.
\paragraph{Case 1: $i \leq j-1$.}
\begin{align}
D_{ij} 
&= \frac{\bigl(\sinh(i\lambda) - \sinh((i-1)\lambda)\bigr)\bigl(\sinh((n+1-j)\lambda) - \sinh((n-j)\lambda)\bigr)}{\sinh(\lambda)\sinh(n\lambda)} \\
&= \frac{4\cosh\bigl((i-\tfrac{1}{2})\lambda\bigr)\sinh(\tfrac{\lambda}{2})\cosh\bigl((n-j+\tfrac{1}{2})\lambda\bigr)\sinh(\tfrac{\lambda}{2})}{\sinh(\lambda)\sinh(n\lambda)} \\
&= 2\cosh\bigl((i-\tfrac{1}{2})\lambda\bigr)\cosh\bigl((n-j+\tfrac{1}{2})\lambda\bigr)\frac{\tanh(\lambda/2)}{\sinh(n\lambda)}.
\end{align}

\paragraph{Case 2: $j \leq i-1$.}
By the symmetry $D_{ij} = D_{ji}$,
\begin{equation}
D_{ij} = 2\cosh\bigl((j-\tfrac{1}{2})\lambda\bigr)\cosh\bigl((n-i+\tfrac{1}{2})\lambda\bigr)\frac{\tanh(\lambda/2)}{\sinh(n\lambda)}.
\end{equation}

\paragraph{Case 3: $i = j$.}
By the symmetry of $B^{-1}$,
\begin{align}
D_{ii} &= \alpha'\bigl(2B^{-1}_{(i-1)i} - B^{-1}_{ii} - B^{-1}_{(i-1)(i-1)}\bigr) \\
&= \frac{2\sinh((n-i)\lambda)\sinh((i-1)\lambda) - \sinh((n-i)\lambda)\sinh(i\lambda) - \sinh((n-i+1)\lambda)\sinh((i-1)\lambda)}{\sinh(\lambda)\sinh(n\lambda)} \\
&= \frac{\sinh((i-1)\lambda)\bigl[\sinh((n-i)\lambda) - \sinh((n-i+1)\lambda)\bigr] + \sinh((n-i)\lambda)\bigl[\sinh((i-1)\lambda) - \sinh(i\lambda)\bigr]}{\sinh(\lambda)\sinh(n\lambda)} \\
&= \frac{-2\sinh(\tfrac{\lambda}{2})\Bigl[\sinh((i-1)\lambda)\cosh((n-i+\tfrac{1}{2})\lambda) + \sinh((n-i)\lambda)\cosh((i-\tfrac{1}{2})\lambda)\Bigr]}{\sinh(\lambda)\sinh(n\lambda)}.
\end{align}
Applying $2\sinh A\cosh B = \sinh(A+B) + \sinh(A-B)$,
\begin{align}
2\sinh((i-1)\lambda)\cosh((n-i+\tfrac{1}{2})\lambda) &= \sinh((n-\tfrac{1}{2})\lambda) + \sinh((2i-n-\tfrac{3}{2})\lambda), \\
2\sinh((n-i)\lambda)\cosh((i-\tfrac{1}{2})\lambda) &= \sinh((n-\tfrac{1}{2})\lambda) + \sinh((n-2i+\tfrac{1}{2})\lambda).
\end{align}
Adding and using $\sinh x + \sinh y = 2\sinh(\tfrac{x+y}{2})\cosh(\tfrac{x-y}{2})$ with $x+y = -\lambda$, $x-y = (4i-2n-2)\lambda$,
\begin{align}
\begin{split}
&\sinh((i-1)\lambda)\cosh((n-i+\tfrac{1}{2})\lambda) + \sinh((n-i)\lambda)\cosh((i-\tfrac{1}{2})\lambda) =\\ &\sinh((n-\tfrac{1}{2})\lambda) - \sinh(\tfrac{\lambda}{2})\cosh((2i-n-1)\lambda).
\end{split}
\end{align}
Therefore
\begin{align}
D_{ii} &= \frac{-2\sinh(\tfrac{\lambda}{2})\sinh((n-\tfrac{1}{2})\lambda) + 2\sinh^2(\tfrac{\lambda}{2})\cosh((2i-n-1)\lambda)}{\sinh(\lambda)\sinh(n\lambda)}.
\end{align}
Using $\sinh((n-\tfrac{1}{2})\lambda) = \sinh(n\lambda)\cosh(\tfrac{\lambda}{2}) - \cosh(n\lambda)\sinh(\tfrac{\lambda}{2})$, $\sinh(\lambda) = 2\sinh(\tfrac{\lambda}{2})\cosh(\tfrac{\lambda}{2})$, and $2\sinh^2(\tfrac{\lambda}{2}) = \cosh(\lambda) - 1$,
\begin{align}
-2\sinh(\tfrac{\lambda}{2})\sinh((n-\tfrac{1}{2})\lambda) &= -\sinh(\lambda)\sinh(n\lambda) + (\cosh(\lambda) - 1)\cosh(n\lambda), \\
2\sinh^2(\tfrac{\lambda}{2})\cosh((2i-n-1)\lambda) &= (\cosh(\lambda) - 1)\cosh((2i-n-1)\lambda).
\end{align}
Combining,
\begin{align}
D_{ii} &= \frac{-\sinh(\lambda)\sinh(n\lambda) + (\cosh(\lambda) - 1)\bigl[\cosh(n\lambda) + \cosh((2i-n-1)\lambda)\bigr]}{\sinh(\lambda)\sinh(n\lambda)}.
\end{align}
Applying $2\cosh A\cosh B = \cosh(A+B) + \cosh(A-B)$ with $A = (i-\tfrac{1}{2})\lambda$, $B = (n-i+\tfrac{1}{2})\lambda$,
\begin{equation}
\cosh(n\lambda) + \cosh((2i-n-1)\lambda) = 2\cosh\bigl((i-\tfrac{1}{2})\lambda\bigr)\cosh\bigl((n-i+\tfrac{1}{2})\lambda\bigr).
\end{equation}
Substituting and using $(\cosh(\lambda) - 1)/\sinh(\lambda) = \tanh(\lambda/2)$,
\begin{equation}
D_{ii} = -1 + 2\,\frac{\cosh\bigl((i-\tfrac{1}{2})\lambda\bigr)\cosh\bigl((n-i+\tfrac{1}{2})\lambda\bigr)\tanh(\lambda/2)}{\sinh(n\lambda)}.
\end{equation}

\paragraph{Combined form.}
Writing the off-diagonal cases uniformly in terms of $\min$ and $\max$, we find
\begin{equation}
\boxed{\,D_{ij} = -\delta_{i=j} + 2\,\dfrac{\cosh\bigl((i-\tfrac{1}{2})\lambda\bigr)\cosh\bigl((n-i+\tfrac{1}{2})\lambda\bigr)\tanh(\lambda/2)}{\sinh(n\lambda)}.}
\end{equation}
It will be useful to define a subpart of this:
\begin{equation}
    \tilde{D}_{ij}:=\cosh\bigl((\min(i,j)-\tfrac{1}{2})\lambda\bigr)\cosh\bigl((n-\max(i,j)+\tfrac{1}{2})\lambda\bigr),
    \label{eq:closed-form-D}
\end{equation}
Note that
\begin{equation}
D_{ij} = -\delta_{i=j} + 2\,\tilde{D}_{ij}\,\frac{\tanh(\lambda/2)}{\sinh(n\lambda)}.
    \label{eq:closed-form-D-2}
\end{equation}
We start from the generalized expression for the exception scaling factor, $\bar{h}$, which isolates the localized rank correction for the exception pair $(p, q)$. 
\begin{equation}
\bar{h} = \frac{1 - {r}_p^{\text{TI}} + {r}_q^{\text{TI}}}{Z - \delta_o(2D_{pq} - D_{pp} - D_{qq})}
\end{equation}
where the regularization factor is $Z := \delta_s(1+\tilde{c}^{-1})$. To evaluate the denominator, we expand the $D_{ij}$ terms. From \eqref{eq:closed-form-D-2}, we infer
\begin{equation}
\begin{split}
2D_{pq} - D_{pp} - D_{qq} =& 2\left(2\tilde{D}_{pq}\frac{\tanh(\lambda/2)}{\sinh(n\lambda)}\right) \\
&- \left(-1 + 2\tilde{D}_{pp}\frac{\tanh(\lambda/2)}{\sinh(n\lambda)}\right) - \left(-1 + 2\tilde{D}_{qq}\frac{\tanh(\lambda/2)}{\sinh(n\lambda)}\right) \\
= &2 + 4 \frac{\tanh(\lambda/2)}{\sinh(n\lambda)} \left( \tilde{D}_{pq} - \frac{1}{2}\tilde{D}_{pp} - \frac{1}{2}\tilde{D}_{qq} \right).
\end{split}
\end{equation}

We apply the hyperbolic difference identity $\cosh A - \cosh B = 2\sinh(\frac{A+B}{2})\sinh(\frac{A-B}{2})$ to the partial distance expressions within the parentheses. This allows us to factor out $\sinh\left(\frac{q-p}{2}\lambda\right)$, consolidating the remaining terms into
\begin{equation}
G := \cosh\left(\left(q - \frac{1}{2}\right)\lambda \right)\sinh\left(\left(n - \frac{p+q-1}{2}\right)\lambda \right) + \cosh\left(\left(n - p + \frac{1}{2}\right)\lambda \right)\sinh\left(\frac{p+q-1}{2}\lambda \right)
\end{equation}
Substituting $G$ back into our expanded term yields:
\begin{equation}
2D_{pq} - D_{pp} - D_{qq} = 2 + 4 \frac{\tanh(\lambda/2)}{\sinh(n\lambda)} \sinh\left(\frac{q-p}{2}\lambda\right) G.
\end{equation}
Finally, substituting this back into the denominator of our original equation transforms the regularized scaling factor $\bar{h}$ into a purely analytical form:
\begin{equation}
\bar{h} = \frac{1 - {r}_p^{\text{TI}} + {r}_q^{\text{TI}}}{(Z - 2\delta_o) - 4\delta_o \frac{\tanh(\lambda/2)}{\sinh(n\lambda)} \sinh\left(\frac{q-p}{2}\lambda\right) G}
\label{eq:h-expr}
\end{equation}
We then substitute $\bar{h}$ and the baseline transitive rank $\tilde{r}^{\text{TI}}$ back into the full rank expression~\eqref{eq:rank-D}:
\begin{equation}
r_i = \tilde{r}_i^{\text{TI}} + \delta_o \bar{h} (D_{ip} - D_{iq}) + \delta_o \bar{h} (\delta_{i=p} - \delta_{i=q})
\end{equation}
This is almost the final expression---we simply need to make a few further algebraic simplifications. First we note that
\begin{align}
\begin{split}
D_{ip} - D_{iq} &= \left(-\delta_{i=p} + 2\tilde{D}_{ip}\frac{\tanh(\lambda/2)}{\sinh(n\lambda)}\right) - \left(-\delta_{i=q} + 2\tilde{D}_{iq}\frac{\tanh(\lambda/2)}{\sinh(n\lambda)}\right)\\
&= -(\delta_{i=p} - \delta_{i=q}) + 2\frac{\tanh(\lambda/2)}{\sinh(n\lambda)}\left(\tilde{D}_{ip} - \tilde{D}_{iq}\right).
\end{split}
\end{align}
Hence,
\begin{align}
\begin{split}
r_i &= {r}_i^{\text{TI}} + \delta_o \bar{h} \left[ -(\delta_{i=p} - \delta_{i=q}) + 2\frac{\tanh(\lambda/2)}{\sinh(n\lambda)}\left(\tilde{D}_{ip} - \tilde{D}_{iq}\right) \right] + \delta_o \bar{h} (\delta_{i=p} - \delta_{i=q}) \\
&= {r}_i^{\text{TI}} + 2\delta_o \bar{h} \frac{\tanh(\lambda/2)}{\sinh(n\lambda)} \left(\tilde{D}_{ip} - \tilde{D}_{iq}\right).
\end{split}
\end{align}
Finally, plugging in \eqref{eq:h-expr} yields
\begin{align}
    \begin{split}
        r_i &= {r}_i^{\text{TI}} + \frac{ 2\delta_o \frac{\tanh(\lambda/2)}{\sinh(n\lambda)} \left(1 - \tilde{r}_p^{\text{TI}} + \tilde{r}_q^{\text{TI}}\right) \left(\tilde{D}_{ip} - \tilde{D}_{iq}\right) }{(Z - 2\delta_o) - 4\delta_o \frac{\tanh(\lambda/2)}{\sinh(n\lambda)} \sinh\left(\frac{q-p}{2}\lambda\right) G} \\
&= {r}_i^{\text{TI}} + \frac{\left(1 - {r}_p^{\text{TI}} + {r}_q^{\text{TI}}\right)\left(\tilde{D}_{ip} - \tilde{D}_{iq}\right)}{\frac{(Z - 2\delta_o)}{2\delta_o \frac{\tanh(\lambda/2)}{\sinh(n\lambda)}} - 2\sinh\left(\frac{q-p}{2}\lambda\right) G} \\
&= {r}_i^{\text{TI}} + \frac{\left(1 - {r}_p^{\text{TI}} + {r}_q^{\text{TI}}\right)\left(\tilde{D}_{ip} - \tilde{D}_{iq}\right)}{\sinh(\lambda)\sinh(n\lambda) - 2\sinh\left(\frac{q-p}{2}\lambda\right) G},
    \end{split}
\end{align}
where we used
\begin{equation}
    Z - 2\delta_o = \delta_s(\alpha + \tilde{c}^{-1}) = 2\delta_o\tanh(\tfrac{\lambda}{2})\sinh\lambda.
\end{equation}

\subsection{Expression of the training set behavior}

We derive the prediction on adjacent training pairs in terms of $r_j$, which already incorporates the exception correction.

Starting from the case-split expression for $k = j+1$,
\begin{equation}
f(x_j, x_{j+1}) = \delta_s \bar{b}_j - \delta_o(\bar{b}_{j-1} + \bar{b}_{j+1})
+ \delta_o \bar{h}\bigl(\delta_{j=p} + \delta_{j+1=q} - \delta_{j=q} - \delta_{j+1=p}\bigr),
\label{eq:fcase}
\end{equation}
we observe that the row-$j$ dual equation~\eqref{S57} reads
\begin{equation}
\delta_s(1 + \tilde{c}^{-1}) \bar{b}_j - \delta_o(\bar{b}_{j-1} + \bar{b}_{j+1})
+ \delta_o \bar{h}\bigl(\delta_{j=p} + \delta_{j+1=q} - \delta_{j=q} - \delta_{j+1=p}\bigr)-1 = 0.
\label{eq:dualrowj}
\end{equation}
Subtracting the left-hand side of~\eqref{eq:dualrowj} from~\eqref{eq:fcase}, the off-diagonal $\bar{b}$ terms
and the indicator terms cancel, yielding
\begin{equation}
f(x_j, x_{j+1}) = 1 - \delta_s \tilde{c}^{-1}\, \bar{b}_j.
\label{eq:fbbar}
\end{equation}

Next, we express $\bar{b}_j$ in terms of the full rank. From the rank
definition $r_j = \delta_o(\bar{b}_j - \bar{b}_{j-1}) + \delta_o \bar{h}(\delta_{j=p} - \delta_{j=q})$,
\begin{equation}
r_j - r_{j+1} = \delta_o(2\bar{b}_j - \bar{b}_{j-1} - \bar{b}_{j+1})
+ \delta_o \bar{h}\bigl(\delta_{j=p} - \delta_{j+1=p} - \delta_{j=q} + \delta_{j+1=q}\bigr).
\end{equation}
\eqref{eq:dualrowj} implies
\begin{equation}
     - \delta_o(\bar{b}_{j-1} + \bar{b}_{j+1})
+ \delta_o \bar{h}\bigl(\delta_{j=p} + \delta_{j+1=q} - \delta_{j=q} - \delta_{j+1=p}\bigr)=1-\delta_s(1 + \tilde{c}^{-1}) \bar{b}_j.
\end{equation}
This lets us infer
\begin{equation}
r_j - r_{j+1} = 2\delta_o \bar{b}_j - \delta_s(1 + \tilde{c}^{-1})\bar{b}_j + 1.
\end{equation}
Using $2\delta_o = \delta_s(1-\alpha)$,
\begin{equation}
r_j - r_{j+1} = 1 - \delta_s\bigl(\alpha + \tilde{c}^{-1}\bigr)\bar{b}_j
\quad\Longrightarrow\quad
\delta_s \bar{b}_j = \frac{1 - (r_j - r_{j+1})}{\alpha + \tilde{c}^{-1}}.
\label{eq:bbarinrank}
\end{equation}
Substituting~\eqref{eq:bbarinrank} into~\eqref{eq:fbbar},
\begin{equation}
f(x_j, x_{j+1}) = 1 - \frac{\tilde{c}^{-1}}{\alpha + \tilde{c}^{-1}}
\bigl[1 - (r_j - r_{j+1})\bigr]
= \frac{\alpha}{\alpha + \tilde{c}^{-1}} + \frac{\tilde{c}^{-1}}{\alpha + \tilde{c}^{-1}}(r_j - r_{j+1}).
\end{equation}
Defining $m := \alpha/(\alpha + \tilde{c}^{-1})$, so $1 - m = \tilde{c}^{-1}/(\alpha + \tilde{c}^{-1})$,
\begin{equation}
\,f(x_j, x_{j+1}) = m \cdot 1 + (1-m)(r_j - r_{j+1}),\qquad
m = \frac{\alpha}{\alpha + \tilde{c}^{-1}}.\,
\end{equation}
we have established the expression for the training case.
\subsection{Properties of the ranking system}
\label{app:ranking-system}
We find that within each transitive section $r_j$ is monotonically decreasing in $j$. We note that $\lambda>0$ and so $r\TI_j$ is monotonically decreasing in $j$. We recall
\begin{equation*}
    r_j^{\mathrm{pert}}(\alpha,\tilde{c})=\frac{\left(1 - {r}_p\TI + {r}_q\TI\right)\left(\tilde{D}_{jp} - \tilde{D}_{jq}\right)}{\sinh(\lambda)\sinh(n\lambda) - 2\sinh\left(\frac{q-p}{2}\lambda\right) G}.
\end{equation*}
Note that $1-r_p\TI+r_q\TI>0$. Further, because $\sinh\left(\frac{q-p}{2}\lambda\right)<0$ and $G>0$, the denominator is also positive. Hence monotonicity is a function of $\tilde{D}_{jp}-\tilde{D}_{jq}$. Recall
\begin{equation*}
    \tilde{D}_{ij}=\cosh\bigl((\min(i,j)-\tfrac{1}{2})\lambda\bigr)\cosh\bigl((n-\max(i,j)+\tfrac{1}{2})\lambda\bigr).
\end{equation*}
Hence, for $j\leq q$,
\begin{equation}
    \tilde{D}_{jp}-\tilde{D}_{jq}=\cosh((j-\tfrac12)\lambda)\left(\cosh\bigl((n-p+\tfrac{1}{2})\lambda\bigr)-\cosh\bigl((n-q+\tfrac{1}{2})\lambda\bigr)\right)
\end{equation}
Because $\cosh(x\lambda)$ is monotonically increasing in $x$,
\begin{equation}
    \cosh\bigl((n-p+\tfrac{1}{2})\lambda\bigr)-\cosh\bigl((n-q+\tfrac{1}{2})\lambda\bigr)<0.
\end{equation}
Because $\cosh((j-\tfrac12)\lambda)$ is monotonically increasing, the total expression is monotonically decreasing. The claim is established analogously for $j\geq p$.
\end{document}